\acrodef{bn}[BN]{Bayesian network}
\acrodef{cp}[CP]{colour passing}
\acrodef{cpr}[ACP]{advanced colour passing}
\acrodef{crv}[CRV]{counting randvar}
\acrodef{decor}[DECOR]{detection of commutative factors}
\acrodef{deft}[DEFT]{detection of exchangeable factors}
\acrodef{fg}[FG]{factor graph}
\acrodef{ljt}[LJT]{lifted junction tree}
\acrodef{lv}[logvar]{logical variable}
\acrodef{lve}[LVE]{lifted variable elimination}
\acrodef{mn}[MN]{Markov network}
\acrodef{pcrv}[PCRV]{parameterised CRV}
\acrodef{pf}[parfactor]{parametric factor}
\acrodef{pfg}[PFG]{parametric factor graph}
\acrodef{prv}[PRV]{parameterised randvar}
\acrodef{rv}[randvar]{random variable}
\acrodef{ve}[VE]{variable elimination}
\acrodef{wl}[WL]{Weisfeiler-Leman}
\crefname{algorithm}{Alg.}{Algs.}
\Crefname{algorithm}{Algorithm}{Algorithms}
\crefname{corollary}{Cor.}{Cors.}
\Crefname{corollary}{Corollary}{Corollaries}
\crefname{definition}{Def.}{Defs.}
\Crefname{definition}{Definition}{Definitions}
\crefname{example}{Ex.}{Exs.}
\Crefname{example}{Example}{Examples}
\crefname{proposition}{Prop.}{Props.}
\Crefname{proposition}{Proposition}{Propositions}
\crefname{section}{Sec.}{Secs.}
\Crefname{section}{Section}{Sections}
\crefname{theorem}{Thm.}{Thms.}
\Crefname{theorem}{Theorem}{Theorems}
\newcommand{\abs}[1]{\lvert #1 \rvert}
\newcommand{\cnt}{\ensuremath{\mathrm{count}}}
\newcommand{\range}[1]{\ensuremath{\mathrm{range}(#1)}}
\definecolor{myyellow}{RGB}{247,192,26}
\definecolor{myblue}{RGB}{37,122,164}
\definecolor{mygreen}{RGB}{78,155,133}
\definecolor{mypurple}{RGB}{86,51,94}
\definecolor{newblue}{RGB}{50,113,173}
\definecolor{newred}{RGB}{222,32,36}
\definecolor{newgreen}{RGB}{70,165,69}
\definecolor{newpurple}{RGB}{140,69,152}
\definecolor{cborange}{RGB}{230,159,0}
\definecolor{cbblue}{RGB}{30,136,229}
\definecolor{cbbluedark}{RGB}{46,37,133}
\definecolor{cbpurple}{RGB}{170,68,153}
\definecolor{cbgreen}{RGB}{0,77,64}
\definecolor{cbgreenlight}{RGB}{93,168,153}
\definecolor{cbbrown}{RGB}{126,41,84}
\tikzset{
	rv/.style={draw, ellipse},
	pf/.style={draw, rectangle, fill = gray!30},
	arc/.style = {->, >={[round,sep]Stealth}},
}
\newcommand\factor[6]{
	\node[pf, #1=#3 of #2, label={#4:{#5}}](#6) {};
}
\newcommand\nodecolorshift[5]{
	\node[circle, fill=#1, above right=0.1cm of #2, inner sep=0pt, minimum size=2mm, xshift=#4, yshift=#5](#3) {};
}
\newcommand\factorcolor[3]{
	\node[circle, fill=#1, above right=0cm and 0.05cm of #2, inner sep=0pt, minimum size=2mm](#3) {};
}
\newcommand\factorcolorshift[4]{
	\node[circle, fill=#1, above right=0cm and 0.05cm of #2, inner sep=0pt, minimum size=2mm, xshift=#4](#3) {};
}
\newcommand\pfs[8]{
	\node[pf, #1=#3 of #2, xshift=-1mm, yshift=1mm](#6) {};
	\node[pf, #1=#3 of #2, label={[label distance=1mm]#4:{#5}}](#7) {};
	\node[pf, #1=#3 of #2, xshift=1mm, yshift=-1mm](#8) {};
}
\title{Efficient Detection of Commutative Factors in Factor Graphs}
\author{
	\Name{Malte Luttermann} \Email{malte.luttermann@dfki.de} \\
	\addr German Research Center for Artificial Intelligence (DFKI), Lübeck, Germany
	\AND
	\Name{Johann Machemer} \Email{johann.machemer@student.uni-luebeck.de} \\
	\addr Institute for Software Engineering and Programming Languages, University of Lübeck, Germany
	\AND
	\Name{Marcel Gehrke} \Email{marcel.gehrke@uni-hamburg.de} \\
	\addr Institute for Humanities-Centered Artificial Intelligence, University of Hamburg, Germany
}
\begin{document}

\maketitle

\begin{abstract}
	Lifted probabilistic inference exploits symmetries in probabilistic graphical models to allow for tractable probabilistic inference with respect to domain sizes.
	To exploit symmetries in, e.g., \aclp{fg}, it is crucial to identify commutative factors, i.e., factors having symmetries within themselves due to their arguments being exchangeable.
	The current state of the art to check whether a factor is commutative with respect to a subset of its arguments iterates over all possible subsets of the factor's arguments, i.e., $O(2^n)$ iterations for a factor with $n$ arguments in the worst case.
	In this paper, we efficiently solve the problem of detecting commutative factors in a \acl{fg}.
	In particular, we introduce the \emph{\acf{decor}} algorithm, which allows us to drastically reduce the computational effort for checking whether a factor is commutative in practice.
	We prove that \ac{decor} efficiently identifies restrictions to drastically reduce the number of required iterations and validate the efficiency of \ac{decor} in our empirical evaluation.
\end{abstract}
\begin{keywords}
	probabilistic graphical models; factor graphs; lifted inference.
\end{keywords}

\acresetall

\section{Introduction} \label{sec:com_intro}
Probabilistic graphical models provide a well-founded formalism to reason under uncertainty and compactly encode a full joint probability distribution as a product of factors.
A fundamental task using probabilistic graphical models is to perform probabilistic inference, that is, to compute marginal distributions of \acp{rv} given observations for other \acp{rv}.
In general, however, probabilistic inference scales exponentially with the number of \acp{rv} in a propositional probabilistic model such as a \acl{bn}, a \acl{mn}, or a \ac{fg} in the worst case.
To allow for tractable probabilistic inference (e.g., probabilistic inference requiring polynomial time) with respect to domain sizes of \aclp{lv}, lifted inference algorithms exploit symmetries in a probabilistic graphical model by using a representative of indistinguishable individuals for computations~\citep{Niepert2014a}.
Performing lifted inference, however, requires a lifted representation of the probabilistic graphical model, which encodes equivalent semantics to the original propositional probabilistic model.
In particular, it is necessary to detect symmetries in the propositional model to construct the lifted representation.
Commutative factors, i.e., factors that map to the same potential value regardless of the order of a subset of their arguments, play a crucial role when detecting symmetries in an \ac{fg} and constructing a lifted representation of the \ac{fg}.
To detect commutative factors, the current state-of-the-art algorithm iterates over all possible subsets of a factor.
In this paper, we show that the search can be guided significantly more efficient and thereby we solve the problem of efficiently detecting commutative factors in an \ac{fg}.

\paragraph{Previous work.}
In probabilistic inference, lifting exploits symmetries in a probabilistic model, allowing to carry out query answering more efficiently while maintaining exact answers~\citep{Niepert2014a}.
First introduced by \citet{Poole2003a}, \acp{pfg} provide a lifted representation by combining first-order logic with probabilistic models, which can be utilised by \acl{lve} as a lifted inference algorithm operating on \acp{pfg}.
After the introduction of \ac{lve} by \citet{Poole2003a}, \ac{lve} has steadily been refined by many researchers~\citep{DeSalvoBraz2005a,DeSalvoBraz2006a,Milch2008a,Kisynski2009a,Taghipour2013a,Braun2018a}.
To perform lifted probabilistic inference, the lifted representation (e.g., the \ac{pfg}) has to be constructed first.
Currently, the state-of-the-art algorithm to construct a \ac{pfg} from a given \ac{fg} is the \ac{cpr} algorithm~\citep{Luttermann2024a}, which is a refinement of the \ac{cp} algorithm~\citep{Kersting2009a,Ahmadi2013a} and is able to construct a \ac{pfg} entailing equivalent semantics as the initially given \ac{fg}.
The \ac{cp} algorithm builds on work by \citet{Singla2008a} and incorporates a colour passing procedure to detect symmetries in a graph similar to the Weisfeiler-Leman algorithm~\citep{Weisfeiler1968a}, which is commonly used to test for graph isomorphism.
\Ac{cpr} employs a slightly refined colour passing procedure and during the course of the algorithm, \ac{cpr} checks for every factor in the given \ac{fg} whether the factor is commutative with respect to a subset of its arguments to determine the outgoing messages of the factor for the colour passing procedure.
The check for commutativity is implemented by iterating over all possible subsets of the factor's arguments, which results in $O(2^n)$ iterations for a factor with $n$ arguments in the worst case.

\paragraph{Our contributions.}
In this paper, we efficiently solve the problem of detecting commutative factors in an \ac{fg}, thereby speeding up the construction of an equivalent lifted representation for a given \ac{fg}.
More specifically, we show how so-called \emph{buckets} can be used to restrict the possible candidates of subsets that have to be considered when checking whether a factor is commutative---that is, which of its arguments are exchangeable.
The idea is that within these buckets, only a subset of potentials from the original factor has to be considered.
Further, argument(s), which produce certain potential(s) can be identified.
Thereby, the search space can be drastically reduced.
We prove that using buckets, the run time complexity is drastically reduced for many practical settings, making the check for commutativity feasible in practice.
Afterwards, we gather the theoretical insights and propose the \ac{decor} algorithm, which applies the theoretical insights to efficiently detect commutative factors in an \ac{fg}.
In addition to our theoretical results, we validate the efficiency of \ac{decor} in our empirical evaluation.

\paragraph{Structure of this paper.}
The remainder of this paper is structured as follows.
We first introduce necessary background information and notations.
In particular, we define \acp{fg} as well as the notion of commutative factors and provide background information on the \ac{cpr} algorithm.
Afterwards, we formally define the concept of a bucket and present our theoretical results, showing how buckets can be used to efficiently restrict possible candidates of subsets that have to be considered when searching for exchangeable arguments of a factor.
We then introduce the \ac{decor} algorithm, which applies the theoretical insights to efficiently detect commutative factors in an \ac{fg} in practice.
Before we conclude, we provide the results of our experimental evaluation, verifying the practical efficiency of \ac{decor}.

\section{Preliminaries} \label{sec:com_prelim}
We first introduce \acp{fg} as undirected propositional probabilistic models and afterwards define commutative factors formally.
An \ac{fg} compactly encodes a full joint probability distribution between \acp{rv}, where the full joint probability distribution is represented as a product of factors~\citep{Frey1997a,Kschischang2001a}.
\begin{definition}[Factor Graph, \citealp{Kschischang2001a}]
	An \emph{\ac{fg}} $G = (\boldsymbol V, \boldsymbol E)$ is an undirected bipartite graph consisting of a node set $\boldsymbol V = \boldsymbol R \cup \boldsymbol \Phi$, where $\boldsymbol R = \{R_1, \ldots, R_n\}$ is a set of variable nodes (\acp{rv}) and $\boldsymbol \Phi = \{\phi_1, \ldots, \phi_m\}$ is a set of factor nodes (functions), as well as a set of edges $\boldsymbol E \subseteq \boldsymbol R \times \boldsymbol \Phi$.
	The term $\range{R_i}$ denotes the possible values of a \ac{rv} $R_i$.
	There is an edge between a variable node $R_i$ and a factor node $\phi_j$ in $\boldsymbol E$ if $R_i$ appears in the argument list of $\phi_j$.
	A factor is a function that maps its arguments to a positive real number, called potential.
	The semantics of $G$ is given by
	$$
		P_G = \frac{1}{Z} \prod_{j=1}^m \phi_j(\mathcal A_j)
	$$
	with $Z$ being the normalisation constant and $\mathcal A_j$ denoting the \acp{rv} connected to $\phi_j$.
\end{definition}
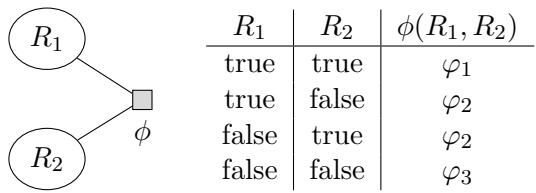
\begin{figure}
	\centering
	\begin{tikzpicture}
	\node[ellipse, draw] (A) {$R_1$};
	\node[ellipse, draw] (B) [below = 2.0em of A] {$R_2$};
	\factor{below right}{A}{1.00em and 2.0em}{270}{$\phi$}{f1}

	\node[right = 1.5em of f1] (tab_f1) {
		\begin{tabular}{c|c|c}
			$R_1$ & $R_2$ & $\phi(R_1,R_2)$ \\ \hline
			true  & true  & $\varphi_1$ \\
			true  & false & $\varphi_2$ \\
			false & true  & $\varphi_2$ \\
			false & false & $\varphi_3$ \\
		\end{tabular}
	};

	\draw (A) -- (f1);
	\draw (B) -- (f1);
\end{tikzpicture}
	\caption{A toy example for an \ac{fg} consisting of two Boolean \acp{rv} $R_1$ and $R_2$ as well as one factor $\phi(R_1, R_2)$. The input-output pairs of $\phi$ are given in the table on the right, where each assignment of $\phi$'s arguments is mapped to a potential with $\varphi_1$, \ldots, $\varphi_3 \in \mathbb{R}^+$.}
	\label{fig:example_fg}
\end{figure}
\begin{example}
	\Cref{fig:example_fg} displays a toy example for an \ac{fg} consisting of two \acp{rv} $R_1$ and $R_2$ as well as one factor $\phi(R_1, R_2)$.
	Both $R_1$ and $R_2$ are Boolean, that is, $\range{R_1} = \range{R_2} = \{\mathrm{true}, \mathrm{false}\}$.
	The input-output pairs of $\phi$ are specified in the table on the right.
	Note that, in this specific example, it holds that $\phi(\mathrm{true}, \mathrm{false}) = \phi(\mathrm{false}, \mathrm{true}) = \varphi_2$.
\end{example}
Lifted inference algorithms exploit symmetries in an \ac{fg} by operating on lifted representations (e.g., \acp{pfg}), which consist of \aclp{prv} and \aclp{pf}, representing sets of \acp{rv} and factors, respectively~\citep{Poole2003a}.
Symmetries in \acp{fg} are highly relevant in many real world domains such as an epidemic domain, where each person influences the probability of an epidemic in the same way---because the probability of having an epidemic depends on the number of sick people and not on individual people being sick.
The probability for an epidemic is the same if there are three sick people and the remaining people in the universe are not sick, independent of whether $alice$, $bob$, and $eve$ or $charlie$, $dave$, and $fred$ are sick.
Analogously, for movies the popularity of an actor influences the success of a movie in the same way for each actor being part of the movie. %and so on for example in other domains. %in a research domain the quality of every publication at a conference influences the quality of a conference equally

To detect symmetries in an \ac{fg} and obtain a lifted representation to speed up inference, the so-called \ac{cpr} algorithm~\citep{Luttermann2024a} can be applied.
The \ac{cpr} algorithm builds on the \acl{cp} algorithm~\citep{Kersting2009a,Ahmadi2013a} and transforms a given \ac{fg} into a \ac{pfg} entailing equivalent semantics as the initial \ac{fg}.
We provide a formal description of the \ac{cpr} algorithm in \cref{appendix:com_acp}.
During the course of the algorithm, \ac{cpr} checks for commutative factors, which play a crucial role in symmetry detection and are defined as follows.
\begin{definition}[Commutative Factor, \citealp{Luttermann2024a}] \label[definition]{def:com_commutative}
	Let $\phi(R_1, \ldots, R_n)$ denote a factor.
	We say that $\phi$ is \emph{commutative with respect to} $\boldsymbol S \subseteq \{R_1, \ldots, R_n\}$ if for all events $r_1, \ldots, r_n \in \times_{i=1}^n \range{R_i}$ it holds that $\phi(r_1, \ldots, r_n) = \phi(r_{\pi(1)}, \ldots, r_{\pi(n)})$ for all permutations $\pi$ of $\{1, \ldots, n\}$ with $\pi(i) = i$ for all $R_i \notin \boldsymbol S$.
	If $\phi$ is commutative with respect to $\boldsymbol S$, we say that all arguments in $\boldsymbol S$ are \emph{commutative arguments} of $\phi$.
\end{definition}
\begin{example}
	Consider again the factor $\phi(R_1,R_2)$ depicted in \cref{fig:example_fg}.
	Since it holds that $\phi(\mathrm{true}, \mathrm{false}) = \phi(\mathrm{false}, \mathrm{true}) = \varphi_2$, $\phi$ is commutative with respect to $\{R_1, R_2\}$.
\end{example}
By definition, every factor is commutative with respect to any singleton subset of its arguments independent of the factor's potential mappings.
Note that, in practice, we are only interested in factors that are commutative with respect to a subset $\boldsymbol S$ of their arguments where $\abs{\boldsymbol S} > 1$ because our intention is to group indistinguishable arguments (and grouping a single element does not yield any benefit).
Since all arguments in $\boldsymbol S$ are candidates to be grouped together, the task we solve in this paper is to compute a subset $\boldsymbol S$ of a factor's arguments of \emph{maximum size} such that the factor is commutative with respect to $\boldsymbol S$.
In many practical settings, factors are commutative with respect to a subset of their arguments, for example if individuals are indistinguishable and only the number of individuals having a certain property is of interest (as in an epidemic domain where the number of persons being sick determines the probability of an epidemic while it does not matter which specific persons are sick, and so on).
We next show how commutative factors can efficiently be detected in an \ac{fg}, which is crucial to exploit symmetries in the \ac{fg} for lifted inference.

\section{Efficient Detection of Commutative Factors Using Buckets} \label{sec:com_thms}
Detecting commutative factors is a fundamental part of lifted model construction.
The current state of the art, however, applies a rather \enquote{naive} approach to compute a subset of commutative arguments of maximum size.
In particular, computing a maximum sized subset of commutative arguments of a factor $\phi$ is currently implemented by iterating over all possible subsets of $\phi$'s arguments in order of descending size of the subsets.
That is, for a factor $\phi(R_1, \ldots, R_n)$, it is first checked whether $\phi$ is commutative with respect to all of its $n$ arguments, then it is checked whether $\phi$ is commutative with respect to any subset consisting of $n-1$ arguments, and so on.
In the worst case, the algorithm needs $O(2^n)$ iterations to compute a maximum sized subset of commutative arguments.
% \begin{theorem}
% 	Let $\phi(R_1, \ldots, R_n)$ be a factor.
% 	The number of subsets $\boldsymbol S \subseteq \{R_1, \ldots, R_n\}$ that must be checked to compute a maximum sized subset of commutative arguments is in $O(2^n)$.
% \end{theorem}
% \begin{proof}
% 	As there are $2^n$ subsets of $\{R_1, \ldots, R_n\}$ in total and we have to check every subset of size at least two in the worst case, the number of subsets to consider when computing a maximum sized subset of commutative arguments is in $O(2^n)$.
% \end{proof}
Even though we have to consider $O(2^n)$ subsets in the worst case, we are able to drastically prune the search space in many practical settings, as we show next.
The idea is that we can partition the potential values a factor maps its arguments to into so-called \emph{buckets}, which allow us to restrict the space of possible candidate subsets and thereby heavily reduce the number of necessary iterations.
A bucket counts the occurrences of specific range values in an assignment for a subset of a factor's arguments.
Consequently, each bucket may contain multiple potential values and every potential value is part of exactly one bucket.
\begin{definition}[Bucket, \citealp{Luttermann2024d}]
	Let $\phi(R_1, \ldots, R_n)$ denote a factor and let $\boldsymbol S \subseteq \{R_1, \ldots, R_n\}$ denote a subset of $\phi$'s arguments such that $\range{R_i} = \range{R_j}$ holds for all $R_i, R_j \in \boldsymbol S$.
	Further, let $\mathcal V$ denote the range of the elements in $\boldsymbol S$ (identical for all $R_i \in \boldsymbol S$).
	Then, a \emph{bucket} $b$ entailed by $\boldsymbol S$ is a set of tuples $\{(v_i, n_i)\}_{i = 1}^{\abs{\mathcal V}}$, $v_i \in \mathcal V$, $n_i \in \mathbb{N}$, and $\sum_i n_i = \abs{\boldsymbol S}$, such that $n_i$ specifies the number of occurrences of value $v_i$ in an assignment for all \acp{rv} in $\boldsymbol S$.
	A shorthand notation for $\{(v_i, n_i)\}_{i = 1}^{\abs{\mathcal V}}$ is $[n_1, \dots, n_{\abs{\mathcal V}}]$.
	In abuse of notation, we denote by $\phi(b)$ the multiset of potentials the assignments represented by $b$ are mapped to by $\phi$.
	The set of all buckets entailed by $\phi$ is denoted as $\mathcal B(\phi)$.
\end{definition}
\begin{table}
	\centering
	\begin{tikzpicture}
	\node (tab_f1) {
		\begin{tabular}{c|c|c|c|c}
			$R_1$ & $R_2$ & $R_3$ & $\phi(R_1,R_2,R_3)$ & $b$ \\ \hline
			true  & true  & true  & $\varphi_1$         & {\color{cborange}$[3,0]$} \\
			true  & true  & false & $\varphi_2$         & {\color{cbbluedark}$[2,1]$} \\
			true  & false & true  & $\varphi_2$         & {\color{cbbluedark}$[2,1]$} \\
			true  & false & false & $\varphi_3$         & {\color{cbpurple}$[1,2]$} \\
			false & true  & true  & $\varphi_4$         & {\color{cbbluedark}$[2,1]$} \\
			false & true  & false & $\varphi_5$         & {\color{cbpurple}$[1,2]$} \\
			false & false & true  & $\varphi_5$         & {\color{cbpurple}$[1,2]$} \\
			false & false & false & $\varphi_6$         & {\color{cbgreen}$[0,3]$} \\
		\end{tabular}
	};

	\node[right = 0.2cm of tab_f1] (buckets) {
		\begin{tabular}{c|c}
			$b$                         & $\phi(b)$                                         \\ \hline
			{\color{cborange}$[3,0]$}   & $\langle \varphi_1 \rangle$                       \\
			{\color{cbbluedark}$[2,1]$} & $\langle \varphi_2, \varphi_2, \varphi_4 \rangle$ \\
			{\color{cbpurple}$[1,2]$}   & $\langle \varphi_3, \varphi_5, \varphi_5 \rangle$ \\
			{\color{cbgreen}$[0,3]$}    & $\langle \varphi_6 \rangle$                       \\
		\end{tabular}
	};
\end{tikzpicture}
	\caption{A factor $\phi(R_1, R_2, R_3)$ with three Boolean arguments and the corresponding mappings of potential values (left table) as well as the partition of potential values into buckets (right table). Here, $\phi$ is commutative with respect to $\boldsymbol S = \{R_2, R_3\}$.}
	\label{fig:example_buckets}
\end{table}
\begin{example}
	Consider the factor $\phi(R_1, R_2, R_3)$ depicted in \cref{fig:example_buckets} and let $\boldsymbol S = \{R_1, R_2, R_3\}$ with $\range{R_1} = \range{R_2} = \range{R_3} = \{\mathrm{true}, \mathrm{false}\}$.
	Then, $\boldsymbol S$ entails the four buckets $\{(\mathrm{true}, 3), (\mathrm{false}, 0)\}$, $\{(\mathrm{true}, 2), (\mathrm{false}, 1)\}$, $\{(\mathrm{true}, 1), (\mathrm{false}, 2)\}$, and $\{(\mathrm{true}, 0), (\mathrm{false}, 3)\}$ (or $[3,0]$, $[2,1]$, $[1,2]$, and $[0,3]$, respectively, in shorthand notation).
	Following the mappings given in the table from \cref{fig:example_buckets}, it holds that $\phi([3,0]) = \langle \varphi_1 \rangle$, $\phi([2,1]) = \langle \varphi_2, \varphi_2, \varphi_4 \rangle$, $\phi([1,2]) = \langle \varphi_3, \varphi_5, \varphi_5 \rangle$, and $\phi([0,3]) = \langle \varphi_6 \rangle$.
\end{example}
Buckets allow us to restrict the candidates of arguments that are possibly commutative (exchangeable) and hence might be grouped.
The idea is that commutative arguments can be replaced by a bucket that counts over their range values instead of listing each of the arguments separately.
In particular, each combination of bucket for the commutative arguments and fixed values for the remaining arguments must be mapped to the same potential value, as illustrated in the upcoming example.
\begin{table}
	\centering
	\begin{tikzpicture}
	\node (tab_f1) {
		\begin{tabular}{c|c|c}
			$R_1$ & $\#_X[R(X)]$ & $\phi(R_1,\#_X[R(X)])$ \\ \hline
			true  & $[2,0]$      & $\varphi_1$         \\
			true  & $[1,1]$      & $\varphi_2$         \\
			true  & $[0,2]$      & $\varphi_3$         \\
			false & $[2,0]$      & $\varphi_4$         \\
			false & $[1,1]$      & $\varphi_5$         \\
			false & $[0,2]$      & $\varphi_6$         \\
		\end{tabular}
	};
\end{tikzpicture}
	\caption{A compressed representation of the factor $\phi$ given in \cref{fig:example_buckets}. The \acp{rv} $R_2$ and $R_3$ are now replaced by a so-called \acl{crv} $\#_X[R(X)]$, which uses buckets to represent $R_2$ and $R_3$ simultaneously. Note that the semantics remains unchanged while the size of the table got reduced due to grouping commutative (exchangeable) \acp{rv}.}
	\label{fig:example_buckets_crv}
\end{table}
\begin{example}
	Take a look at \cref{fig:example_buckets_crv}, which contains a compressed representation of the factor $\phi$ from \cref{fig:example_buckets}.
	Originally, $\phi$ maps each combination of bucket for $R_2$ and $R_3$ and fixed values for $R_1$ to the same potential value, e.g., $R_1 = \mathrm{true}$, $R_2 = \mathrm{true}$, $R_3 = \mathrm{false}$ and $R_1 = \mathrm{true}$, $R_2 = \mathrm{false}$, $R_3 = \mathrm{true}$ (that is, the combination of value $\mathrm{true}$ for $R_1$ and bucket $[1,1]$ for $R_2$ and $R_3$) are mapped to the same potential value $\varphi_2$.
	Consequently, these two assignments can be represented by a single assignment $(\mathrm{true}, [1,1])$ in the compressed representation shown in \cref{fig:example_buckets_crv}.
	In general, it is possible to compress all commutative arguments in a factor by replacing them by a so-called \acl{crv}, which uses buckets as a representation.
	In this particular example, $\phi$ is commutative with respect to $\boldsymbol S = \{R_2, R_3\}$ and thus, we are able to group $R_2$ and $R_3$ together.
\end{example}
Grouping commutative arguments does not alter the semantics of the factor and at the same time reduces the size of the table that has to be stored to encode the input-output mappings of the factor as well as the inference time.
%Further, the compressed table also allows for performing inference more efficient.
To obtain the maximum possible compression, we therefore aim to compute a \emph{maximum sized} subset of commutative arguments.

A crucial observation to efficiently compute a maximum sized subset of commutative arguments is that the buckets of the initially given factor must contain duplicate potential values as a necessary condition for arguments to be able to be commutative.
In particular, groups of identical potential values within a bucket directly restrict the possible candidates of commutative arguments, as we show next.
\begin{theorem} \label{th:com_args_by_buckets}
	Let $\phi(R_1, \ldots, R_n)$ be a factor and let $b \in \mathcal B(\phi)$ with $\abs{\phi(b)} > 1$ be a bucket entailed by $\phi$.
	Further, let $G = \{\varphi_1, \ldots, \varphi_{\ell}\}$ with $\abs{G} > 2$ denote an arbitrary maximal set of identical potential values in $\phi(b)$ and let $(r_1^1, \ldots, r_n^1), \ldots, (r_1^{\ell}, \ldots, r_n^{\ell})$ denote all assignments corresponding to the potential values in $G$.
	Then, a subset $\boldsymbol S \subseteq \{R_1, \ldots, R_n\}$ with $\abs{\boldsymbol S} > 2$ such that $\phi$ is commutative with respect to $\boldsymbol S$ is obtained by computing the element-wise intersection $(r_1^1, \ldots, r_n^1) \cap (r_1^{\ell}, \ldots, r_n^{\ell})$ and adding all arguments $R_i$ to $\boldsymbol S$ for which it holds that $\{r_i^1\} \cap \ldots \cap \{r_i^{\ell}\} = \emptyset$.
	Here, the element-wise intersection of two assignments $(x_1, \ldots, x_{\ell})$ and $(y_1, \ldots, y_{\ell})$ contains the value $x_i$ at position $i$ if $x_i = y_i$, otherwise position $i$ equals $\emptyset$.
	%is defined as $(x_1, \ldots, x_{\ell}) \cap (y_1, \ldots, y_{\ell}) = (\{x_1\} \cap \{y_1\}, \ldots, \{x_{\ell}\} \cap \{y_{\ell}\})$.
\end{theorem}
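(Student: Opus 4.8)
The plan is to lean on two structural facts. First, commutativity is \emph{closed under taking subsets}: if $\phi$ is commutative with respect to a set $\boldsymbol T$, then it is commutative with respect to every $\boldsymbol S \subseteq \boldsymbol T$, since the permutations fixing $\{R_1,\ldots,R_n\}\setminus\boldsymbol S$ form a subgroup of those fixing $\{R_1,\ldots,R_n\}\setminus\boldsymbol T$ (fixing more positions yields a smaller permutation group), so the invariance required by \cref{def:com_commutative} is inherited. Hence it suffices to produce a set $\boldsymbol S$ that sits inside a commutative set and to read off its members. Second, membership of an assignment in a bucket $b$ is determined exactly by its histogram of range values over the arguments underlying $b$: two assignments lie in the same bucket if and only if one is a permutation of the other on those positions.

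First I would make the element-wise intersection precise. For the $\ell$ assignments $(r_1^k,\ldots,r_n^k)$, $k\in\{1,\ldots,\ell\}$, attached to the common potential value collected in $G$, position $i$ survives the chained intersection $\{r_i^1\}\cap\cdots\cap\{r_i^\ell\}$ exactly when $r_i^1 = \cdots = r_i^\ell$, and becomes $\emptyset$ precisely when the $\ell$ values at position $i$ disagree. Thus $\boldsymbol S$ collects the positions on which the assignments of $G$ vary, while every argument outside $\boldsymbol S$ receives one fixed value shared by all of them. Since the $\ell$ assignments all belong to the same bucket $b$ they carry the same global histogram, and since they coincide on every position outside $\boldsymbol S$, the histogram restricted to $\boldsymbol S$ must also be identical across all of them. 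Consequently any two assignments in $G$ differ only by a permutation acting within $\boldsymbol S$ and fixing everything outside, i.e. $G$ lies inside a single orbit of $\mathrm{Sym}(\boldsymbol S)$ acting on the $\boldsymbol S$-positions. From this observation the size bound $\abs{\boldsymbol S} > 2$ is immediate: with at most two varying positions such an orbit has size at most two, contradicting $\abs{G} > 2$.

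The core step is to upgrade this to the invariance statement of \cref{def:com_commutative}. Because every assignment in $G$ is mapped to the same potential, $\phi$ is constant on $G$; I would argue, using the maximality of $G$ among equal potentials in $\phi(b)$ together with $\abs{G} > 2$ and $\abs{\phi(b)} > 1$, that $G$ is in fact a \emph{complete} $\mathrm{Sym}(\boldsymbol S)$-orbit. Once $G$ is a full orbit, applying any permutation of the $\boldsymbol S$-positions to an assignment of $G$ again yields an assignment of $G$ and therefore leaves the potential unchanged, so $\phi$ is invariant under all permutations fixing the complement of $\boldsymbol S$, which is exactly commutativity with respect to $\boldsymbol S$.

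I expect the main obstacle to be precisely this last identification, namely certifying that the maximal group $G$ coincides with an entire $\mathrm{Sym}(\boldsymbol S)$-orbit rather than with a proper sub-collection of it; this is where the hypotheses $\abs{G} > 2$, $\abs{\phi(b)} > 1$, and the maximality of $G$ must be used in an essential way, ruling out the degenerate situation in which the orbit splits into several potential values. The remaining ingredients—the subset-closure of commutativity and the combinatorial reading of the element-wise intersection in terms of agreeing and disagreeing positions—are routine and can be dispatched quickly.
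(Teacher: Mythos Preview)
You are proving the wrong direction. The theorem, despite its slightly ambiguous wording, is a \emph{recovery} statement: it \emph{assumes} there is some $\boldsymbol S$ with $\abs{\boldsymbol S}>2$ such that $\phi$ is commutative with respect to $\boldsymbol S$, and asserts that the element-wise intersection procedure applied to a maximal group $G$ identifies the positions of $\boldsymbol S$. The paper's proof makes this explicit: it begins by letting ``$\boldsymbol S$ denote any subset of commutative arguments such that $\abs{\boldsymbol S}>2$'' and then argues, using \cref{def:com_commutative}, that positions outside $\boldsymbol S$ are fixed across all assignments in $G$ (non-empty intersection) while positions inside $\boldsymbol S$ vary (empty intersection), because the full $\mathrm{Sym}(\boldsymbol S)$-orbit of any assignment in $G$ must lie in $G$ by maximality. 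This is why the algorithm later treats the output only as a \emph{candidate} set, to be intersected across all buckets.

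You instead attempt the converse---that the set $\boldsymbol S$ produced from an \emph{arbitrary} maximal group $G$ is always a commutative set---and this is simply false. Take $\phi(R_1,R_2,R_3,R_4)$ Boolean, and in bucket $b=[2,2]$ set $\phi$ to $1$ on $(T,T,F,F),(T,F,T,F),(T,F,F,T),(F,T,T,F)$ and to $2$ on $(F,T,F,T),(F,F,T,T)$. Then $G$ (the four assignments mapping to $1$) is maximal with $\abs{G}=4>2$, the element-wise intersection is $\emptyset$ at every position, hence $\boldsymbol S=\{R_1,R_2,R_3,R_4\}$; yet $\phi$ is not commutative with respect to this $\boldsymbol S$, since swapping $R_1$ and $R_2$ sends $(F,T,F,T)\mapsto 2$ to $(T,F,F,T)\mapsto 1$. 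So the ``main obstacle'' you flag---showing $G$ is a full $\mathrm{Sym}(\boldsymbol S)$-orbit---cannot be overcome from the stated hypotheses. And even were $G$ a full orbit, your argument would only yield invariance of $\phi$ on the assignments in $G$, not on \emph{all} assignments as \cref{def:com_commutative} requires; commutativity is a global property of $\phi$, not a property of a single bucket.
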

\begin{proof}
	It holds that $\phi(r_1^1, \ldots, r_n^1) = \varphi_1, \ldots, \phi(r_1^{\ell}, \ldots, r_n^{\ell}) = \varphi_{\ell}$ and $\varphi_1 = \ldots = \varphi_{\ell}$.
	Moreover, let $\boldsymbol S \subseteq \{R_1, \ldots, R_n\}$ denote any subset of commutative arguments such that $\abs{\boldsymbol S} > 2$.
	Recall that according to \cref{def:com_commutative}, for all events $r_1, \ldots, r_n \in \times_{i=1}^n \range{R_i}$ it holds that $\phi(r_1, \ldots, r_n) = \phi(r_{\pi(1)}, \ldots, r_{\pi(n)})$ for all permutations $\pi$ of $\{1, \ldots, n\}$ with $\pi(i) = i$ for all $R_i \notin \boldsymbol S$.
	Thus, the positions of all arguments which are not in $\boldsymbol S$ are fixed in all permutations.
	Consequently, the element-wise intersection of the assignments yields non-empty sets for all positions belonging to arguments not in $\boldsymbol S$ because these assignments contain identical assigned values at those positions.
	At the same time, as we consider only buckets with at least two elements, all assignments contain at least two distinct assigned values because all assignments that contain only a single value are those that map to a bucket containing a single element.
	We further consider all assignments $(r_{\pi(1)}, \ldots, r_{\pi(n)})$ for all permutations $\pi$ of $\{1, \ldots, n\}$ and therefore, there are at least two different assigned values for all positions belonging to arguments in $\boldsymbol S$, yielding an empty set when computing the element-wise intersection of the assignments for a position of any argument in $\boldsymbol S$.
\end{proof}
\Cref{th:com_args_by_buckets} tells us that candidate subsets of commutative arguments can be found by first identifying groups of identical potential values within a bucket and then computing the element-wise intersection of the assignments corresponding to these potential values.
The following example illustrates how \cref{th:com_args_by_buckets} can be applied to identify commutative arguments.
\begin{example}
	Consider again the factor $\phi(R_1, R_2, R_3)$ depicted in \cref{fig:example_buckets}.
	For the sake of the example, let us take a look at the bucket $[2,1]$.
	We have two groups of identical values, namely $G_1 = \{\varphi_2, \varphi_2\}$ and $G_2 = \{\varphi_4\}$.
	Since $G_2$ contains only a single potential value, there are no candidates of commutative arguments induced by $G_2$.
	However, $G_1$ contains the potential value $\varphi_2$ two times and the corresponding assignments are $(\mathrm{true},\mathrm{true},\mathrm{false})$ and $(\mathrm{true},\mathrm{false},\mathrm{true})$.
	The element-wise intersection of those assignments is then given by $(\mathrm{true},\mathrm{true},\mathrm{false}) \cap (\mathrm{true},\mathrm{false},\mathrm{true}) = (\mathrm{true},\emptyset,\emptyset)$.
	As the element-wise intersection is empty at positions two and three, the set $\{R_2, R_3\}$ is a possible subset of commutative arguments.
\end{example}
By looking for \emph{maximal} sets $G$ of identical potential values, we ensure that we actually find maximum sized candidate subsets of commutative arguments.
Moreover, the number of positions having an empty element-wise intersection directly corresponds to the size of every candidate subset $\boldsymbol S$, that is, the number of positions having an empty element-wise intersection is equal to $\abs{\boldsymbol S}$.
As all permutations of these $\abs{\boldsymbol S}$ positions are mapped to the same potential value, we know that if there is a subset of commutative arguments of size $\abs{\boldsymbol S}$, there must be at least $\abs{\boldsymbol S}$ duplicate potential values in every bucket of size at least two.
\begin{corollary} \label[corollary]{th:duplicates_in_buckets}
	Let $\phi(R_1, \ldots, R_n)$ be a factor and let $\boldsymbol S \subseteq \{R_1, \ldots, R_n\}$ be a subset of arguments such that $\phi$ is commutative with respect to $\boldsymbol S$.
	Then, in every bucket $b \in \mathcal B(\phi)$ with $\abs{\phi(b)} > 1$, there exists a potential value $\varphi$ that occurs at least $\abs{\boldsymbol S}$ times in $\phi(b)$.
\end{corollary}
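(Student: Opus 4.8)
The plan is to fix an arbitrary commutative subset $\boldsymbol S$ with $\abs{\boldsymbol S} = k$ together with an arbitrary bucket $b \in \mathcal B(\phi)$ satisfying $\abs{\phi(b)} > 1$, and to exhibit a single assignment in $b$ whose orbit under the permutations that move only the $\boldsymbol S$-positions has three properties: it stays inside $b$, it is mapped entirely to one potential value, and it has size at least $k$. The case $k = 1$ is trivial, since any nonempty $\phi(b)$ contains some value at least once, so I would assume $k \geq 2$ throughout.

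First I would observe that permuting only the positions belonging to $\boldsymbol S$ leaves the count vector $[n_1, \ldots, n_{\abs{\mathcal V}}]$ of the assignment unchanged, because such a permutation merely rearranges values among positions without altering how often each range value occurs; hence the entire orbit of an assignment under these permutations is contained in the same bucket $b$. Next, by \cref{def:com_commutative}, commutativity with respect to $\boldsymbol S$ guarantees that $\phi$ assigns the same potential value to every member of such an orbit (this is precisely the permutation invariance already exploited in the proof of \cref{th:com_args_by_buckets}). It therefore remains only to produce one assignment in $b$ whose orbit is large, namely of size at least $k$.

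To this end I would pick an assignment in $b$ whose $\boldsymbol S$-positions carry at least two distinct range values. Since $\abs{\phi(b)} > 1$, the bucket represents more than one assignment, which forces at least two range values to occur among the $n$ positions in total; because $k \geq 2$ and $k \leq n$, I can always place one occurrence of two such present values into the $k$ slots of $\boldsymbol S$ and distribute the remaining values arbitrarily, so a non-monochromatic-on-$\boldsymbol S$ assignment exists. For any such assignment the orbit size equals the multinomial coefficient $\binom{k}{m_1, \ldots, m_t}$, where $m_1, \ldots, m_t \geq 1$ are the multiplicities of the $t \geq 2$ distinct values appearing in the $\boldsymbol S$-positions, and this coefficient is at least $\binom{k}{1,\, k-1} = k$. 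Combining the three properties, the common potential value of this orbit occurs at least $k = \abs{\boldsymbol S}$ times in $\phi(b)$, which is exactly the claim.

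I expect the main obstacle to be the construction in the third step: I must rule out the degenerate possibility that the count vector of $b$ forces every assignment in $b$ to be monochromatic on $\boldsymbol S$, in which case the orbit would collapse to a single element and the counting bound would fail. Verifying that this never happens under the hypotheses $\abs{\phi(b)} > 1$, $k \geq 2$, and $k \leq n$ --- essentially a small combinatorial placement argument --- is the only genuinely non-routine point; the orbit invariance and the multinomial lower bound are then immediate.
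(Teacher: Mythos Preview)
Your proposal is correct and follows the same underlying idea as the paper, namely that the orbit of an assignment under permutations of the $\boldsymbol S$-positions stays inside the bucket and is mapped to a single potential value. The paper, however, does not give a standalone proof of this corollary: it simply remarks (just before the statement) that ``all permutations of these $\abs{\boldsymbol S}$ positions are mapped to the same potential value'' and concludes the existence of $\abs{\boldsymbol S}$ duplicates, leaving implicit both why an assignment with a non-monochromatic $\boldsymbol S$-pattern exists in a bucket with $\abs{\phi(b)} > 1$ and why the resulting orbit has size at least $\abs{\boldsymbol S}$. Your multinomial lower bound $\binom{k}{m_1,\ldots,m_t} \geq \binom{k}{1,k-1} = k$ and the explicit placement argument for constructing a non-monochromatic assignment are precisely the details that turn the paper's one-sentence heuristic into an actual proof, so your write-up is strictly more complete than what appears in the paper while following the same route.
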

\begin{example}
	Consider again the factor $\phi(R_1, R_2, R_3)$, which is commutative with respect to $\boldsymbol S = \{R_2, R_3\}$, depicted in \cref{fig:example_buckets}.
	%Recall that $\phi$ is .
	Therefore, $\phi$ must map the buckets $[2,1]$ and $[1,2]$ (the only buckets that are mapped to at least two elements by $\phi$) to at least $\abs{\boldsymbol S} = 2$ identical potential values.
	Here, $\varphi_2$ occurs twice in $\phi([2,1])$ and $\varphi_5$ occurs twice in $\phi([1,2])$.
\end{example}
As we know for every bucket $b$ containing more than one element that there must exist a potential occurring at least $\abs{\boldsymbol S}$ times in $b$ to allow for a subset $\boldsymbol S$ of commutative arguments to exist, we also know that there cannot be more commutative arguments than the minimum number of duplicate potential values over all buckets containing more than one element.
\begin{corollary} \label[corollary]{th:com_args_bound}
	Let $\phi(R_1, \ldots, R_n)$ denote a factor.
	Then, the size of any subset of commutative arguments of $\phi$ is upper-bounded by
	$$
		\min_{b \in \{b \mid b \in \mathcal B(\phi) \land \abs{\phi(b)} > 1\}} \max_{\varphi \in \phi(b)} \cnt(\phi(b), \varphi),
	$$
	where $\cnt(\phi(b), \varphi)$ denotes the number of occurrences of potential $\varphi$ in $\phi(b)$.
\end{corollary}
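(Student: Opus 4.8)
The plan is to derive this bound as a direct aggregation of \cref{th:duplicates_in_buckets} across buckets, since that corollary already supplies the per-bucket count guarantee and the only remaining work is a quantifier argument. First I would fix an arbitrary subset $\boldsymbol S \subseteq \{R_1, \ldots, R_n\}$ of commutative arguments of $\phi$; it suffices to show that $\abs{\boldsymbol S}$ is at most the displayed quantity, because the inequality then holds for every commutative subset and, in particular, for one of maximum size.

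Next I would invoke \cref{th:duplicates_in_buckets} to obtain, for each bucket $b \in \mathcal B(\phi)$ with $\abs{\phi(b)} > 1$, a potential value $\varphi_b \in \phi(b)$ occurring at least $\abs{\boldsymbol S}$ times in $\phi(b)$, that is, $\cnt(\phi(b), \varphi_b) \geq \abs{\boldsymbol S}$. Since the maximum count over all potentials in $\phi(b)$ can only be at least as large as the count of this particular value, this immediately yields $\max_{\varphi \in \phi(b)} \cnt(\phi(b), \varphi) \geq \abs{\boldsymbol S}$ for each such bucket $b$ individually.

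The key remaining step is the aggregation over buckets: because the inequality $\max_{\varphi \in \phi(b)} \cnt(\phi(b), \varphi) \geq \abs{\boldsymbol S}$ holds \emph{simultaneously} for all buckets $b$ with $\abs{\phi(b)} > 1$, the smallest of these maxima is still at least $\abs{\boldsymbol S}$, i.e. $\min_{b} \max_{\varphi \in \phi(b)} \cnt(\phi(b), \varphi) \geq \abs{\boldsymbol S}$. As $\boldsymbol S$ was an arbitrary commutative subset, this establishes the claimed upper bound on the size of any such subset.

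I do not anticipate a genuine technical obstacle here, since the statement is essentially a one-line consequence of \cref{th:duplicates_in_buckets}; the heavy lifting was already done in \cref{th:com_args_by_buckets} and its corollary. The only point requiring care is the degenerate case in which no bucket satisfies $\abs{\phi(b)} > 1$, so that the minimum is taken over the empty set. I would read this as $+\infty$, making the bound vacuously true, which is consistent with the fact that the absence of any bucket containing more than one potential imposes no constraint on which arguments may be exchangeable. I would flag this convention explicitly to keep the statement well defined.
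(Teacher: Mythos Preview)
Your proposal is correct and matches the paper's own reasoning: the paper does not give a formal proof of this corollary but introduces it with precisely your argument, namely that \cref{th:duplicates_in_buckets} guarantees at least $\abs{\boldsymbol S}$ duplicates in every bucket with more than one element, whence the minimum over buckets of the maximum duplicate count bounds $\abs{\boldsymbol S}$. Your explicit handling of the empty-index-set case is a nice addition that the paper leaves implicit.
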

\begin{example}
	The factor $\phi(R_1, R_2, R_3)$ given in \cref{fig:example_buckets} maps $[2,1]$ to $\langle \varphi_2, \varphi_2, \varphi_4 \rangle$ and $[1,2]$ to $\langle \varphi_3, \varphi_5, \varphi_5 \rangle$ and thus, the upper bound for the size of any subset of commutative arguments of $\phi$ is two because $\phi$ maps the buckets $[2,1]$ and $[1,2]$ to two identical potential values each.
\end{example}
The bounds implied by \cref{th:duplicates_in_buckets,th:com_args_bound} show that the number of possible candidate subsets of commutative arguments can be heavily reduced in many practical settings.
In particular, we are now able to exploit the insights from \cref{th:com_args_by_buckets} as well as \cref{th:duplicates_in_buckets,th:com_args_bound} to drastically restrict the search space of possible candidate subsets of commutative arguments, thereby avoiding the naive iteration over all $O(2^n)$ subsets of a factor's arguments and thus making the check for commutativity feasible in practice.

We next incorporate our theoretical findings into a practical algorithm, called \ac{decor}, to efficiently compute subsets of commutative arguments of maximum size in practice.

\section{The \acs{decor} Algorithm} \label{sec:com_decor}
We are now ready to gather the theoretical results to obtain the \ac{decor} algorithm, which efficiently computes a maximum sized subset of commutative arguments of a given factor.
\Cref{alg:decor} presents the whole \ac{decor} algorithm, which undertakes the following steps on a given input factor $\phi(R_1, \dots, R_n)$ to compute subsets of commutative arguments.

\begin{algorithm2e}[t]
	\caption{Detection of Commutative Factors (DECOR)}
	\label{alg:decor}
	\DontPrintSemicolon
	\LinesNumbered
	\KwIn{A factor $\phi(R_1, \dots, R_n)$.}
	\KwOut{All subsets $\boldsymbol S \subseteq \{R_1, \dots, R_n\}$ with $\abs{\boldsymbol S} \geq 2$ such that $\phi$ is commutative with respect to $\boldsymbol S$, or $\emptyset$ if no such subset $\boldsymbol S$ exists.}
	\BlankLine
	$C \gets \{\{R_1, \dots, R_n\}\}$\;
	\ForEach{$b \in \mathcal B(\phi)$}{
		\If{$\abs{\phi(b)} < 2$}{
			\Continue\;
		}
		$G_1, \dots, G_{\ell} \gets$ Partition $\phi(b)$ into maximal groups of identical values such that $\abs{G_i} \geq 2$ for all $i \in \{1, \dots, \ell\}$\; \label{line:partition_max_groups}
		\If{$\ell < 1$}{
			\Return{$\emptyset$}\;
		}
		$C' \gets \emptyset$\;
		\ForEach{$G_i \in \{G_1, \dots, G_{\ell}\}$}{ \label{line:loop_groups}
			$P_1, \dots, P_n \gets$ Intersection of positions corresponding to all potentials in $G_i$\; \label{line:pos_intersection}
			$C_i \gets \{R_i \mid i \in \{1, \dots, n\} \land P_i = \emptyset\}$\;
			\If{$\nexists C_j \in C': C_i \subseteq C_j$}{
				$C' \gets C' \cup \{C_i\}$\;
			}
		}
		$C_{\cap} \gets \emptyset$\;
		\ForEach{$C_i \in C$}{ \label{line:loop_intersect_outer}
			\ForEach{$C_j \in C'$}{ \label{line:loop_intersect_inner}
				\If{$\abs{C_i \cap C_j} \geq 2 \land \nexists C'' \in C_{\cap}: C_i \cap C_j \subseteq C''$}{
					$C_{\cap} \gets C_{\cap} \cup \{C_i \cap C_j\}$\;
				}
			}
		}
		$C \gets C_{\cap}$\;
		\If{$C = \emptyset$}{
			\Return{$\emptyset$}\;
		}
	}
	\Return{$C$}\;
\end{algorithm2e}

First, \ac{decor} initialises a set $C$ of possible candidate subsets of commutative arguments, which contains the whole argument list of $\phi$ at the beginning.
\Ac{decor} then iterates over all buckets entailed by $\phi$, where all buckets containing only a single element are skipped because they do not restrict the search space.
\Ac{decor} then partitions the values in every bucket into maximal groups $G_1, \ldots, G_{\ell}$ of identical potential values, where each group must contain at least two identical potential values as we are only interested in subsets of commutative arguments of size at least two.
If no such group is found, \ac{decor} returns an empty set as there are no candidates for a subset of commutative arguments of size at least two.
Afterwards, \ac{decor} applies the insights from \cref{th:com_args_by_buckets} and iterates over all groups of identical potential values to compute the element-wise intersection of the assignments corresponding to the potential values within a group.
For every group $G_i$ of identical potential values, \ac{decor} then builds a set $C_i$ containing all arguments whose position in the element-wise intersection over their corresponding assignments is empty.
\Ac{decor} then adds the set $C_i$ of arguments that are commutative according to the current bucket to the set $C'$ of candidate subsets for the current bucket if $C_i$ is not subsumed by any other candidate subset in $C'$.
Thereafter, as each bucket further restricts the search space for possible commutative arguments, \ac{decor} computes the intersection of all candidate subsets collected so far in $C$ and all candidate subsets of the current bucket in $C'$.
\Ac{decor} keeps all candidate subsets of size at least two after the intersection that are not already subsumed by another candidate subset after the intersection.
In case there are no candidate subsets of size at least two left after considering a specific bucket, \ac{decor} returns an empty set as a result.
Finally, if there are still candidate subsets left after iterating over all buckets of size at least two, \ac{decor} returns the set of all candidate subsets of commutative arguments of maximum size.
A maximum sized subset of commutative arguments can then be obtained by selecting any candidate subset in $C$ that contains the most elements.

Note that \ac{decor} returns \emph{all} candidate subsets of commutative arguments that are not subsumed by another candidate subset and could be replaced by a so-called \acl{crv} in the lifted representation of the factor.
As we aim for the most compression possible, we are mostly interested in the maximum sized subset of commutative arguments.
However, it might be conceivable that there are settings in which counting over two or more candidate subsets of commutative arguments yields a higher compression than counting over a single maximum sized subset.
We therefore keep \ac{decor} as a general algorithm that is able to return all candidate subsets of commutative arguments that are not subsumed by another candidate subset and leave the decision of which subset(s) to choose to the user.
\begin{example}
	Let us take a look at how \ac{decor} computes a maximum sized subset of commutative arguments for the factor $\phi(R_1, R_2, R_3)$ from \cref{fig:example_buckets}.
	Initially, \ac{decor} starts with the set $C = \{\{R_1, R_2, R_3\}\}$.
	After skipping the bucket $[3,0]$, \ac{decor} finds the group $G_1 = \{\varphi_2, \varphi_2\}$ for the bucket $[2,1]$.
	Other groups contain less than two elements and are thus ignored.
	\Ac{decor} then computes the element-wise intersection of the assignments corresponding to the potential values in $G_1$, which is given by $(\mathrm{true},\mathrm{true},\mathrm{false}) \cap (\mathrm{true},\mathrm{false},\mathrm{true}) = (\mathrm{true},\emptyset,\emptyset)$.
	As the element-wise intersection is empty at positions two and three, \ac{decor} adds the set $C_i = \{R_2, R_3\}$ to the set $C'$ of candidate subsets for the bucket $[2,1]$.
	Afterwards, \ac{decor} computes the intersection of $\{R_1, R_2, R_3\}$ (as $C = \{\{R_1, R_2, R_3\}\}$) and $\{R_2, R_3\}$ (as $C' = \{\{R_2, R_3\}\}$), which yields $C_{\cap} = \{\{R_2, R_3\}\}$ and thus also $C = \{\{R_2, R_3\}\}$ for the next iteration.
	\Ac{decor} next finds the group $G_1 = \{\varphi_5, \varphi_5\}$ for the bucket $[1,2]$ and computes the element-wise intersection of the assignments corresponding to the potential values in $G_1$, which is given by $(\mathrm{false},\mathrm{true},\mathrm{false}) \cap (\mathrm{false},\mathrm{false},\mathrm{true}) = (\mathrm{false},\emptyset,\emptyset)$.
	Again, positions two and three are empty and thus, \ac{decor} adds the set $C_i = \{R_2, R_3\}$ to the set $C'$ of candidate subsets for the bucket $[1,2]$.
	Thereafter, \ac{decor} computes the intersection of $\{R_2, R_3\}$ (as $C = \{\{R_2, R_3\}\}$) and $\{R_2, R_3\}$ (as $C' = \{\{R_2, R_3\}\}$), which yields $C = C_{\cap} = \{\{R_2, R_3\}\}$.
	Finally, as the next bucket $[0,3]$ is skipped again and there are no other buckets left, \ac{decor} returns the set $\{\{R_2, R_3\}\}$ containing a single candidate subset of commutative arguments, which is at the same time a maximum sized subset.
\end{example}
We remark that the number of candidate subsets considered by \ac{decor} now directly depends on the number of groups of identical potential values in the buckets of the factor.
In most practical settings, potential values are not identical unless the factor is commutative with respect to a subset of its arguments.
Therefore, \ac{decor} heavily restricts the search space for possible candidate subsets of commutative arguments in most practical settings.

Recall that the \enquote{naive} algorithm iterates over $O(2^n)$ subsets of arguments and then has to consider all $\abs{\mathcal R}^n$ (where $\mathcal R$ denotes the range of the arguments and $n$ is the total number of arguments) potential values in the table of potential mappings during each iteration to check whether a subset of arguments is commutative.
While \ac{decor} also has to consider all $\abs{\mathcal R}^n$ potential values during the course of the algorithm (in fact, every algorithm searching for commutative arguments has to take a look at every potential value at least once), \ac{decor} considers every potential value just twice (in \cref{line:partition_max_groups,line:pos_intersection}).
\begin{theorem}
	Let $\phi(R_1, \ldots, R_n)$ denote a factor and let $\mathcal R = \range{R_1} = \ldots = \range{R_n}$.
	The worst-case time complexity of the \enquote{naive} algorithm to compute a maximum sized subset of commutative arguments is in $O(2^n \cdot \abs{\mathcal R}^n \cdot n)$.
\end{theorem}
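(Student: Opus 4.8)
The plan is to bound the total running time by multiplying three independent cost factors: the number of candidate subsets the naive algorithm enumerates, the size of the potential table that must be scanned for each candidate, and the per-entry cost of deciding which table rows are required to agree. First I would count the candidates. The naive algorithm enumerates every subset $\boldsymbol S \subseteq \{R_1, \ldots, R_n\}$ (in descending order of size), and there are exactly $\sum_{k=0}^{n} \binom{n}{k} = 2^n$ such subsets, which contributes the factor $2^n$.

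Next I would bound the work expended per candidate subset. Fix a subset $\boldsymbol S$. Since $\range{R_1} = \ldots = \range{R_n} = \mathcal R$, the table encoding $\phi$ has exactly $\abs{\mathcal R}^n$ rows, one for each assignment in $\times_{i=1}^n \range{R_i}$. To test commutativity with respect to $\boldsymbol S$ according to \cref{def:com_commutative}, two assignments are required to map to the same potential whenever they agree on every argument outside $\boldsymbol S$ and induce the same multiset of range values on the arguments inside $\boldsymbol S$. I would realise this check by computing, for each row, a canonical key consisting of the fixed values of the arguments not in $\boldsymbol S$ together with the bucket (multiset of values) of the arguments in $\boldsymbol S$; reading off these $n$ entries and forming the key costs $O(n)$ per row. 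Grouping the rows by key and verifying that all rows sharing a key carry the same potential then takes $O(\abs{\mathcal R}^n \cdot n)$ in total for the fixed subset.

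Finally I would combine the factors: multiplying the $2^n$ candidate subsets by the $O(\abs{\mathcal R}^n \cdot n)$ cost of checking each one yields the claimed worst-case bound $O(2^n \cdot \abs{\mathcal R}^n \cdot n)$.

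The main obstacle is justifying the $O(n)$ per-row factor rather than the two more transparent factors. One has to argue that the commutativity check for a single subset genuinely reduces to a single pass over the table with only linear-in-$n$ work per entry---that is, that computing the canonical key and performing the consistency test do not hide an additional dependence on $n$ or on $\abs{\mathcal R}$. This is where a careful description of the grouping step is needed; since the statement only asserts an upper bound, it suffices to exhibit one implementation meeting the bound rather than to determine the exact constant.
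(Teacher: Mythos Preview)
Your proposal is correct and follows essentially the same approach as the paper: enumerate the $2^n$ subsets, and for each subset scan all $\abs{\mathcal R}^n$ rows while spending $O(n)$ per row to compute the bucket/key, yielding the product bound. Your justification of the $O(n)$ per-row cost via the canonical-key construction is in fact more explicit than the paper's, which simply states that computing the buckets requires considering all $n$ positions in all $\abs{\mathcal R}^n$ assignments.
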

\begin{proof}
	The \enquote{naive} algorithm iterates over all subsets of $\phi$'s arguments and then iterates for every subset over all buckets induced by that subset to check whether each bucket is mapped to a unique potential value by $\phi$.
	To compute the buckets, all $n$ positions in all $\abs{\mathcal R}^n$ assignments are considered.
	The iteration over all values in the buckets requires time $O(\abs{\mathcal R}^n)$, as every potential value is looked at once.
	In the worst case, the \enquote{naive} algorithm iterates over $O(2^n)$ subsets of arguments and thus has a complexity of $O(2^n \cdot \abs{\mathcal R}^n \cdot n)$.
\end{proof}
\begin{theorem}
	Let $\phi(R_1, \ldots, R_n)$ denote a factor and let $\mathcal R = \range{R_1} = \ldots = \range{R_n}$.
	The expected worst-case time complexity of the \ac{decor} algorithm to compute a maximum sized subset of commutative arguments is in $O(\abs{\mathcal B(\phi)} \cdot k \cdot \binom{n}{n/2} + \abs{\mathcal R}^n \cdot n)$, where $k$ denotes the maximum number of potential values within a bucket entailed by $\phi$.
\end{theorem}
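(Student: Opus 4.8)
The plan is to establish the bound additively, separating the total running time into two contributions that match the two summands: (i) the one-time cost of reading every potential value of $\phi$ and assigning each assignment to its bucket, which I will show is $O(\abs{\mathcal R}^n \cdot n)$, and (ii) the cost of maintaining the candidate collections $C$ and $C'$ across the outer loop over buckets, which I will show is $O(\abs{\mathcal B(\phi)} \cdot k \cdot \binom{n}{n/2})$. Since these two phases are interleaved but their work is disjoint, summing the two bounds yields the claim.

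First I would handle the value-reading contribution. As already noted in the discussion preceding the theorem, \ac{decor} inspects each of the $\abs{\mathcal R}^n$ potential values only a constant number of times: once when partitioning a bucket into maximal groups of identical values in \cref{line:partition_max_groups}, and once when forming the position-wise intersections of the corresponding assignments in \cref{line:pos_intersection}. Moreover, associating each assignment with its bucket requires reading all $n$ of its positions. Summed over all $\abs{\mathcal R}^n$ assignments, this gives $O(\abs{\mathcal R}^n \cdot n)$, and since every bucket and every group is produced from exactly these reads, this accounts for the entire second summand.

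Next I would bound the bookkeeping on $C$ and $C'$. The outer loop runs once per bucket, i.e.\ $\abs{\mathcal B(\phi)}$ times. Within a single bucket, the number of maximal groups $\ell$ is at most $k/2$, because each group built in \cref{line:partition_max_groups} contains at least two of the at most $k$ potentials of that bucket; hence $\abs{C'} \le \ell \le k$. The decisive structural claim is that the surviving candidate collection $C$ stays an antichain in the subset lattice over $\{R_1,\dots,R_n\}$, so by Sperner's theorem $\abs{C} \le \binom{n}{\lfloor n/2 \rfloor}$ at all times. The nested loop in \cref{line:loop_intersect_outer,line:loop_intersect_inner} then performs at most $\abs{C}\cdot\abs{C'} \le \binom{n}{n/2}\cdot k$ intersect-and-insert steps per bucket, and summing over all buckets produces the first summand.

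The hard part will be controlling the cost of the subsumption tests, and it is here that the qualifier \emph{expected} enters. Two issues need care. First, I must verify that the one-directional pruning rule in \cref{line:loop_intersect_inner} (which discards a freshly formed intersection only when it is contained in a candidate already present in $C_{\cap}$) indeed keeps $C$ within the Sperner antichain bound rather than letting it grow multiplicatively in $k$ from bucket to bucket; establishing this invariant rigorously is the most delicate step, and I would argue it by showing that intersection can only shrink candidates while the subsumption check removes all dominated duplicates. Second, a naive subsumption test of $C_i \cap C_j$ against $C_{\cap}$ could cost $O(\abs{C_{\cap}})$, which would threaten an extra $\binom{n}{n/2}$ factor; I would instead implement the membership and subsumption tests over the candidate families using hashing (with subsets stored as fixed-width bitmasks so that a single intersection or containment comparison is $O(1)$ in the word-RAM model), so that each of the $\binom{n}{n/2}\cdot k$ operations incurs only expected constant overhead. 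Under this implementation the per-bucket work is $O(k\cdot\binom{n}{n/2})$, and combining it with the reading phase gives the stated $O(\abs{\mathcal B(\phi)} \cdot k \cdot \binom{n}{n/2} + \abs{\mathcal R}^n \cdot n)$.
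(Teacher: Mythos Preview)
Your decomposition into the two summands and the per-bucket accounting mirror the paper's proof closely. The paper likewise argues that partitioning and the position-wise intersection touch each of the $\abs{\mathcal R}^n$ rows a constant number of times, charges $O(n)$ per row, and then bounds the nested loops over $C$ and $C'$ by $\abs{C}\cdot\abs{C'}\le \binom{n}{n/2}\cdot k$ per bucket. One noteworthy difference: the paper locates the qualifier ``expected'' in the loop over groups, stating that the subset test performed there takes expected $O(n)$ time (via hashing the elements of a set), and it simply does not analyse the cost of the subsumption check inside the $C$--$C'$ loop at all; you instead attribute the ``expected'' to that bookkeeping on $C_\cap$.

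There is, however, a genuine gap in your treatment of that subsumption check. Storing subsets as bitmasks makes a \emph{single} containment comparison $O(1)$, but the test $\nexists\, C''\in C_\cap\colon C_i\cap C_j\subseteq C''$ is a search over all of $C_\cap$, and hashing does not accelerate superset queries over a family of sets; you still pay $\Theta(\abs{C_\cap})$ per test, which reintroduces exactly the extra $\binom{n}{n/2}$ factor you were trying to avoid. Relatedly, your Sperner argument hinges on $C$ remaining an antichain, but the one-directional pruning rule only prevents inserting a set that is dominated by an existing member of $C_\cap$; it does not evict an existing member that becomes dominated by a newly inserted, strictly larger intersection, so the antichain invariant is not obviously maintained and your proposed justification (``the subsumption check removes all dominated duplicates'') is not what the algorithm actually does. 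The paper's own justification for $\abs{C}\le\binom{n}{n/2}$ is looser still---it merely observes that $\binom{n}{j}$ is maximised at $j=n/2$---and sidesteps both issues, so at the paper's level of rigour your argument is no worse; but the explicit claims you make about hashing and about the antichain invariant do not hold as stated.
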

\begin{proof}
	The partitioning of $\phi(b)$ into maximal groups of identical potential values (\cref{line:partition_max_groups}) requires time linear in the number of potential values in the bucket $b$ and as all $\abs{\mathcal R}^n$ potential values are looked at exactly once over all buckets, the partitioning requires time $O(\abs{\mathcal R}^n)$ in total for all buckets.
	\Ac{decor} finds at most $\lfloor\abs{\mathcal R}^n/2\rfloor$ groups of identical potential values (if every group contains exactly two identical potential values) over all buckets.
	The loop in \cref{line:loop_groups} thus needs $O(\abs{\mathcal R}^n)$ iterations in the worst case for all buckets entailed by $\phi$.
	Computing the intersection over all positions is done in time $O(n)$ and subset testing can be done in expected time $O(n)$, thereby yielding an expected time complexity of $O(\abs{\mathcal R}^n \cdot n)$ for the loop in \cref{line:loop_groups}.
	For each bucket, \ac{decor} adds at most $k/2$ candidate subsets to $C'$ and as there are at most $\binom{n}{n/2}$ candidate subsets in $C$ (there are $\binom{n}{j}$ subsets of size $j$ and the binomial coefficient reaches its maximum at $j = n/2$), the loops in \cref{line:loop_intersect_outer,line:loop_intersect_inner} run in time $O(k \cdot \binom{n}{n/2})$ per bucket, yielding a total run time of $O(\abs{\mathcal B(\phi)} \cdot k \cdot \binom{n}{n/2} + \abs{\mathcal R}^n \cdot n)$.
\end{proof}
There are $\abs{\mathcal B(\phi)} = \binom{n+\abs{\mathcal R}-1}{n}$ buckets for a factor $\phi(R_1, \ldots, R_n)$ where $\mathcal R = \range{R_1} = \ldots = \range{R_n}$\footnote{The number of buckets is given by the number of weak compositions of $n$ into $\abs{\mathcal R}$ parts.}.
For example, a factor with $n$ Boolean arguments ($\abs{\mathcal R} = 2$) entails $\binom{n + 1}{n} = n+1$ buckets.
Of these buckets, \ac{decor} only has to consider $n-1$ buckets, as two map to one potential.  
Note that the worst-case complexity of \ac{decor} is overly pessimistic as it is impossible to have the maximum number of groups $G_1, \ldots, G_{\ell}$ and the maximum number of candidate subsets $C$ simultaneously.
The binomial coefficient is a generous upper bound nearly impossible to reach in practice, which is also confirmed in our experiments.

Further, we argue that $n/2$, instead of the binomial coefficient, is still a generous upper bound in practice for the maximum number of groups and the maximum number of candidate subsets.
In case there are hardly any symmetries within the factor, there are also hardly any duplicate potentials in a bucket.
With hardly any duplicate potentials in a bucket, the number of candidate sets is small.
In case the model is symmetric in general and there are symmetries within a factor, then there is a good chance that in each bucket at least one potential occurs quite often. 
This case leads to few but large candidate sets.
Further, the candidate sets across all buckets are likely to be rather identical.
%Here, the sizes of the candidate sets most likely coincide with the domain sizes of \aclp{lv} of parfactors, which are around to the check commutative factor. 
Thus, having $n/2$ as an upper bound for both the maximum number of groups and the maximum number of candidate subsets is still pessimistic in practice, as \ac{decor} most likely either finds few (larger) sets or hardly any sets at all. 
%Further, the sizes of the groups most likely coincide with the domain sizes of logical variables of parfactors, which are around to the check commutative factor. 
Overall, our arguments why \ac{decor} finishes fast in practice are:
In case the model is highly symmetric, symmetries within a factor are expected to be over larger sets.
Thus, \ac{decor} has just few sets to consider.
In case there are hardly any symmetries within a factor and the model itself, \ac{decor} has to check few small candidate sets and might even end up with an empty candidate set after few buckets.

Finally, before we demonstrate the practical efficiency of \ac{decor} in our experimental evaluation, we remark that \ac{decor} can handle factors with arguments having arbitrary ranges.
During the course of this paper, we consider identical ranges of all arguments for brevity, however, it is possible to apply \ac{decor} to factors with arguments having various ranges $\mathcal R_1, \ldots, \mathcal R_k$ by considering the buckets for all arguments with range $\mathcal R_i$ separately for all $i \in \{1, \ldots, k\}$ as only arguments having the same range can be commutative at all.

\section{Experiments} \label{sec:com_eval}
In addition to our theoretical results, we conduct an experimental evaluation of \ac{decor} to assess its performance in practice.
We compare the run times of \ac{decor} to the \enquote{naive} approach, i.e., iterating over all possible subsets of a factor's arguments in order of descending size of the subsets.
For our experiments, we generate factors with $n \in \{2,4,6,8,10,12,14,16\}$ Boolean arguments, of which $k \in \{0,2,\lfloor\frac{n}{2}\rfloor,n-1,n\}$ arguments are commutative.
We run each algorithm with a timeout of five minutes per instance and report the average run time over all instances for each choice of $n$.

\Cref{fig:plot-avg} displays the average run times of the algorithms on a logarithmic scale.
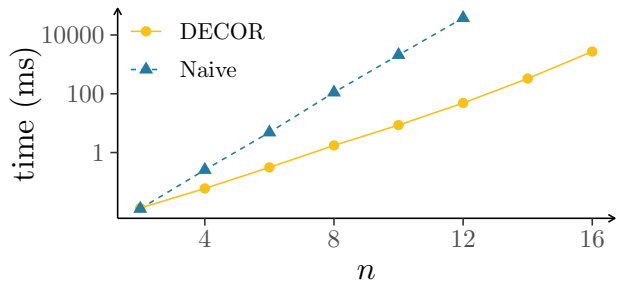
\begin{figure}
	\centering
	% Created by tikzDevice version 0.12.5 on 2024-05-08 13:26:49
% !TEX encoding = UTF-8 Unicode
\begin{tikzpicture}[x=1pt,y=1pt]
\definecolor{fillColor}{RGB}{255,255,255}
\path[use as bounding box,fill=fillColor,fill opacity=0.00] (0,0) rectangle (238.49,115.63);
\begin{scope}
\path[clip] (  0.00,  0.00) rectangle (238.49,115.63);
\definecolor{drawColor}{RGB}{255,255,255}
\definecolor{fillColor}{RGB}{255,255,255}

\path[draw=drawColor,line width= 0.6pt,line join=round,line cap=round,fill=fillColor] (  0.00,  0.00) rectangle (238.49,115.63);
\end{scope}
\begin{scope}
\path[clip] ( 44.91, 30.69) rectangle (232.99,110.13);
\definecolor{fillColor}{RGB}{255,255,255}

\path[fill=fillColor] ( 44.91, 30.69) rectangle (232.99,110.13);
\definecolor{drawColor}{RGB}{247,192,26}

\path[draw=drawColor,line width= 0.6pt,line join=round] ( 53.46, 34.57) --
	( 77.88, 41.99) --
	(102.31, 49.96) --
	(126.74, 58.26) --
	(151.16, 65.97) --
	(175.59, 74.33) --
	(200.02, 83.59) --
	(224.44, 93.81);
\definecolor{drawColor}{RGB}{37,122,164}

\path[draw=drawColor,line width= 0.6pt,dash pattern=on 2pt off 2pt ,line join=round] ( 53.46, 34.30) --
	( 77.88, 49.01) --
	(102.31, 63.21) --
	(126.74, 78.33) --
	(151.16, 92.52) --
	(175.59,106.52);
\definecolor{fillColor}{RGB}{247,192,26}

\path[fill=fillColor] (200.02, 83.59) circle (  1.96);

\path[fill=fillColor] ( 77.88, 41.99) circle (  1.96);

\path[fill=fillColor] (175.59, 74.33) circle (  1.96);

\path[fill=fillColor] (151.16, 65.97) circle (  1.96);

\path[fill=fillColor] ( 53.46, 34.57) circle (  1.96);

\path[fill=fillColor] (102.31, 49.96) circle (  1.96);

\path[fill=fillColor] (224.44, 93.81) circle (  1.96);

\path[fill=fillColor] (126.74, 58.26) circle (  1.96);
\definecolor{fillColor}{RGB}{37,122,164}

\path[fill=fillColor] ( 77.88, 52.07) --
	( 80.53, 47.49) --
	( 75.24, 47.49) --
	cycle;

\path[fill=fillColor] (175.59,109.57) --
	(178.23,105.00) --
	(172.95,105.00) --
	cycle;

\path[fill=fillColor] (151.16, 95.58) --
	(153.81, 91.00) --
	(148.52, 91.00) --
	cycle;

\path[fill=fillColor] ( 53.46, 37.35) --
	( 56.10, 32.77) --
	( 50.82, 32.77) --
	cycle;

\path[fill=fillColor] (102.31, 66.26) --
	(104.95, 61.69) --
	( 99.67, 61.69) --
	cycle;

\path[fill=fillColor] (126.74, 81.38) --
	(129.38, 76.81) --
	(124.09, 76.81) --
	cycle;
\end{scope}
\begin{scope}
\path[clip] (  0.00,  0.00) rectangle (238.49,115.63);
\definecolor{drawColor}{RGB}{0,0,0}

\path[draw=drawColor,line width= 0.6pt,line join=round] ( 44.91, 30.69) --
	( 44.91,110.13);

\path[draw=drawColor,line width= 0.6pt,line join=round] ( 46.33,107.67) --
	( 44.91,110.13) --
	( 43.49,107.67);
\end{scope}
\begin{scope}
\path[clip] (  0.00,  0.00) rectangle (238.49,115.63);
\definecolor{drawColor}{gray}{0.30}

\node[text=drawColor,anchor=base east,inner sep=0pt, outer sep=0pt, scale=  0.88] at ( 39.96, 52.55) {1};

\node[text=drawColor,anchor=base east,inner sep=0pt, outer sep=0pt, scale=  0.88] at ( 39.96, 74.80) {100};

\node[text=drawColor,anchor=base east,inner sep=0pt, outer sep=0pt, scale=  0.88] at ( 39.96, 97.04) {10000};
\end{scope}
\begin{scope}
\path[clip] (  0.00,  0.00) rectangle (238.49,115.63);
\definecolor{drawColor}{gray}{0.20}

\path[draw=drawColor,line width= 0.6pt,line join=round] ( 42.16, 55.58) --
	( 44.91, 55.58);

\path[draw=drawColor,line width= 0.6pt,line join=round] ( 42.16, 77.83) --
	( 44.91, 77.83);

\path[draw=drawColor,line width= 0.6pt,line join=round] ( 42.16,100.08) --
	( 44.91,100.08);
\end{scope}
\begin{scope}
\path[clip] (  0.00,  0.00) rectangle (238.49,115.63);
\definecolor{drawColor}{RGB}{0,0,0}

\path[draw=drawColor,line width= 0.6pt,line join=round] ( 44.91, 30.69) --
	(232.99, 30.69);

\path[draw=drawColor,line width= 0.6pt,line join=round] (230.53, 29.26) --
	(232.99, 30.69) --
	(230.53, 32.11);
\end{scope}
\begin{scope}
\path[clip] (  0.00,  0.00) rectangle (238.49,115.63);
\definecolor{drawColor}{gray}{0.20}

\path[draw=drawColor,line width= 0.6pt,line join=round] ( 77.88, 27.94) --
	( 77.88, 30.69);

\path[draw=drawColor,line width= 0.6pt,line join=round] (126.74, 27.94) --
	(126.74, 30.69);

\path[draw=drawColor,line width= 0.6pt,line join=round] (175.59, 27.94) --
	(175.59, 30.69);

\path[draw=drawColor,line width= 0.6pt,line join=round] (224.44, 27.94) --
	(224.44, 30.69);
\end{scope}
\begin{scope}
\path[clip] (  0.00,  0.00) rectangle (238.49,115.63);
\definecolor{drawColor}{gray}{0.30}

\node[text=drawColor,anchor=base,inner sep=0pt, outer sep=0pt, scale=  0.88] at ( 77.88, 19.68) {4};

\node[text=drawColor,anchor=base,inner sep=0pt, outer sep=0pt, scale=  0.88] at (126.74, 19.68) {8};

\node[text=drawColor,anchor=base,inner sep=0pt, outer sep=0pt, scale=  0.88] at (175.59, 19.68) {12};

\node[text=drawColor,anchor=base,inner sep=0pt, outer sep=0pt, scale=  0.88] at (224.44, 19.68) {16};
\end{scope}
\begin{scope}
\path[clip] (  0.00,  0.00) rectangle (238.49,115.63);
\definecolor{drawColor}{RGB}{0,0,0}

\node[text=drawColor,anchor=base,inner sep=0pt, outer sep=0pt, scale=  1.10] at (138.95,  7.64) {$n$};
\end{scope}
\begin{scope}
\path[clip] (  0.00,  0.00) rectangle (238.49,115.63);
\definecolor{drawColor}{RGB}{0,0,0}

\node[text=drawColor,rotate= 90.00,anchor=base,inner sep=0pt, outer sep=0pt, scale=  1.10] at ( 13.08, 70.41) {time (ms)};
\end{scope}
\begin{scope}
\path[clip] (  0.00,  0.00) rectangle (238.49,115.63);

\path[] ( 42.93, 74.29) rectangle (103.32,114.20);
\end{scope}
\begin{scope}
\path[clip] (  0.00,  0.00) rectangle (238.49,115.63);
\definecolor{drawColor}{RGB}{247,192,26}

\path[draw=drawColor,line width= 0.6pt,line join=round] ( 49.87,101.47) -- ( 61.43,101.47);
\end{scope}
\begin{scope}
\path[clip] (  0.00,  0.00) rectangle (238.49,115.63);
\definecolor{fillColor}{RGB}{247,192,26}

\path[fill=fillColor] ( 55.65,101.47) circle (  1.96);
\end{scope}
\begin{scope}
\path[clip] (  0.00,  0.00) rectangle (238.49,115.63);
\definecolor{drawColor}{RGB}{37,122,164}

\path[draw=drawColor,line width= 0.6pt,dash pattern=on 2pt off 2pt ,line join=round] ( 49.87, 87.02) -- ( 61.43, 87.02);
\end{scope}
\begin{scope}
\path[clip] (  0.00,  0.00) rectangle (238.49,115.63);
\definecolor{fillColor}{RGB}{37,122,164}

\path[fill=fillColor] ( 55.65, 90.07) --
	( 58.30, 85.49) --
	( 53.01, 85.49) --
	cycle;
\end{scope}
\begin{scope}
\path[clip] (  0.00,  0.00) rectangle (238.49,115.63);
\definecolor{drawColor}{RGB}{0,0,0}

\node[text=drawColor,anchor=base west,inner sep=0pt, outer sep=0pt, scale=  0.80] at ( 68.38, 98.71) {DECOR};
\end{scope}
\begin{scope}
\path[clip] (  0.00,  0.00) rectangle (238.49,115.63);
\definecolor{drawColor}{RGB}{0,0,0}

\node[text=drawColor,anchor=base west,inner sep=0pt, outer sep=0pt, scale=  0.80] at ( 68.38, 84.26) {Naive};
\end{scope}
\end{tikzpicture}
	\caption{Average run times of \ac{decor} and the \enquote{naive} algorithm for factors with different numbers of arguments $n$. For each choice of $n$, we report the average run time over all choices of numbers of commutative arguments $k$, where $k \in \{0,2,\lfloor\frac{n}{2}\rfloor,n-1,n\}$.}
	\label{fig:plot-avg}
\end{figure}
While both \ac{decor} and the \enquote{naive} algorithm are capable of solving small instances with less than ten arguments in under a second, the \enquote{naive} algorithm struggles to solve instances with increasing $n$.
More specifically, at $n = 8$, \ac{decor} is already faster than the \enquote{naive} algorithm by a factor of 100 and the factor drastically increases with an increasing size of $n$.
After $n = 12$, the \enquote{naive} algorithm runs into a timeout, which is not surprising as it iterates over $O(2^n)$ subsets.
\Ac{decor}, on the other hand, solves all instances within the specified timeout and is able to handle large instances in an efficient manner.
We remark that the \enquote{naive} algorithm is able to solve instances with $n > 12$ if $k = n-1$ or $k = n$, which is due to its iterations being performed in order of descending size of the subsets.
However, as the \enquote{naive} algorithm runs into a timeout for all remaining choices of $k$, it is not possible to compute a meaningful average run time.
We therefore provide additional experimental results in \cref{appendix:com_further_results}, where we report run times for each choice of $k$ separately.

\section{Conclusion}
We present the \ac{decor} algorithm to efficiently detect commutative factors in \acp{fg}.
\Ac{decor} makes use of buckets to significantly restrict the search space and thereby efficiently handles factors even with a large number of arguments where previous algorithms fail to compute a solution within a timeout.
We prove that the number of subsets to check for commutative arguments is upper-bounded depending on the number of duplicate potential values in the buckets.
By exploiting this upper bound, \ac{decor} drastically reduces the number of subsets to check for commutative arguments compared to previous algorithms, which iterate over $O(2^n)$ subsets for a factor with $n$ arguments in the worst case.
Additionally, \ac{decor} returns \emph{all} candidate subsets of commutative arguments that are not subsumed by another candidate subset, allowing to compress the model even further.
Overall, \ac{decor} drastically increases the efficiency to identify commutative factors and further reduces the model.

\acks{This work is funded by the BMBF project AnoMed 16KISA057.}

\bibliography{references}

\clearpage
\appendix
\acresetall

\section{Formal Description of the Advanced Colour Passing Algorithm} \label{appendix:com_acp}
The \ac{cpr} algorithm introduced by \citet{Luttermann2024a} builds on the \acl{cp} algorithm~\citep{Kersting2009a,Ahmadi2013a} and solves the problem of constructing a lifted representation---more specifically, a so-called \ac{pfg}---from a given \ac{fg}.
The idea of \ac{cpr} is to first find symmetries in a propositional \ac{fg} and then group together symmetric subgraphs.
\Ac{cpr} looks for symmetries based on potentials of factors, on ranges and evidence of \acp{rv}, as well as on the graph structure by passing around colours.
\Cref{alg:cp_revisited} provides a formal description of the \ac{cpr} algorithm, which proceeds as follows.

\begin{algorithm2e}
	\caption{Advanced Colour Passing~\citep{Luttermann2024a}}
	\label{alg:cp_revisited}
	\DontPrintSemicolon
	\LinesNumbered
	\KwIn{An \ac{fg} $G$ with \acp{rv} $\boldsymbol R = \{R_1, \ldots, R_n\}$ and factors $\boldsymbol \Phi = \{\phi_1, \ldots, \phi_m\}$, as well as a set of evidence $\boldsymbol E = \{R_1 = r_1, \ldots, R_k = r_k\}$.}
	\KwOut{A lifted representation $G'$ in form of a \ac{pfg} entailing equivalent semantics to $G$.}
	\BlankLine
	Assign each $R_i$ a colour according to $\range{R_i}$ and $\boldsymbol E$\;
	Assign each $\phi_i$ a colour according to order-independent potentials and rearrange arguments accordingly\;
	\Repeat{grouping does not change}{
		\ForEach{factor $\phi \in \boldsymbol \Phi$}{
			$signature_{\phi} \gets [\,]$\;
			\ForEach{\ac{rv} $R \in neighbours(G, \phi)$}{
				\tcp{In order of appearance in $\phi$}
				$append(signature_{\phi}, R.colour)$\;
			}
			$append(signature_{\phi}, \phi.colour)$\;
		}
		Group together all $\phi$s with the same signature\;
		Assign each such cluster a unique colour\;
		Set $\phi.colour$ correspondingly for all $\phi$s\;
		\ForEach{\ac{rv} $R \in \boldsymbol R$}{
			$signature_{R} \gets [\,]$\;
			\ForEach{factor $\phi \in neighbours(G, R)$}{
				\If{$\phi$ is commutative w.r.t.\ $\boldsymbol S$ and $R \in \boldsymbol S$}{
					$append(signature_{R}, (\phi.colour, 0))$\;
				}
				\Else{
					$append(signature_{R}, (\phi.colour, p(R, \phi)))$\;
				}
			}
			Sort $signature_{R}$ according to colour\;
			$append(signature_{R}, R.colour)$\;
		}
		Group together all $R$s with the same signature\;
		Assign each such cluster a unique colour\;
		Set $R.colour$ correspondingly for all $R$s\;
	}
	$G' \gets$ construct \acs{pfg} from groupings\;
\end{algorithm2e}

\Ac{cpr} begins with the colour assignment to variable nodes, meaning that all \acp{rv} that have the same range and observed event are assigned the same colour.
Thereafter, \ac{cpr} assigns colours to factor nodes such that factors representing identical potentials are assigned the same colour.
Two factors represent identical potentials if there exists a rearrangement of one of the factor's arguments such that both factors have identical tables of potentials when comparing them row by row~\citep{Luttermann2024d}.

After the initial colour assignments, \ac{cpr} passes the colours around.
\Ac{cpr} first passes the colours from every variable node to its neighbouring factor nodes and afterwards, every factor node $\phi$ sends its colour plus the position $p(R, \phi)$ of $R$ in $\phi$'s argument list to all of its neighbouring variable nodes $R$.
Note that factors being commutative with respect to a subset $\boldsymbol S$ of their arguments omit the position when sending their colour to a neighbouring variable node $R \in \boldsymbol S$, and therefore, \ac{cpr} has to compute for each factor a maximum sized subset of its arguments such that the factor is commutative with respect to that subset.
In its original form, \ac{cpr} iterates over all possible subsets of arguments, in descending order of their size, to compute the maximum sized subset.

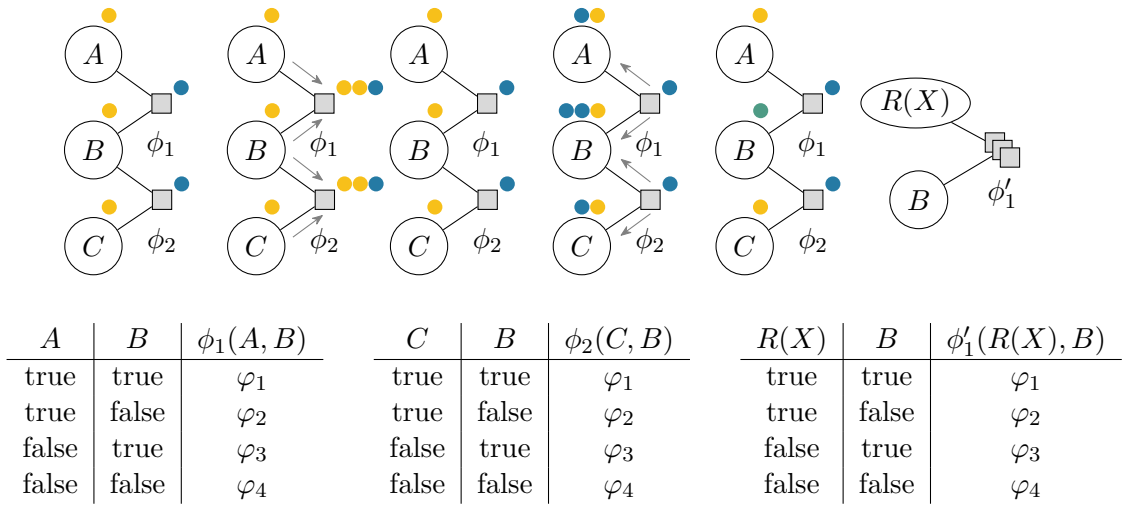
\begin{figure}[t]
	\centering
	\begin{tikzpicture}[label distance=1mm]
	%%% Input model %%%
	\node[circle, draw] (A) {$A$};
	\node[circle, draw] (B) [below = 0.5cm of A] {$B$};
	\node[circle, draw] (C) [below = 0.5cm of B] {$C$};
	\factor{below right}{A}{0.25cm and 0.5cm}{270}{$\phi_1$}{f1}
	\factor{below right}{B}{0.25cm and 0.5cm}{270}{$\phi_2$}{f2}

	\nodecolorshift{myyellow}{A}{Acol}{-2.1mm}{1mm}
	\nodecolorshift{myyellow}{B}{Bcol}{-2.1mm}{1mm}
	\nodecolorshift{myyellow}{C}{Ccol}{-2.1mm}{1mm}

	\factorcolor{myblue}{f1}{f1col}
	\factorcolor{myblue}{f2}{f2col}

	\draw (A) -- (f1);
	\draw (B) -- (f1);
	\draw (B) -- (f2);
	\draw (C) -- (f2);

	%%% Color Passing: Node to factor %%%
	\node[circle, draw, right = 1.4cm of A] (A1) {$A$};
	\node[circle, draw, below = 0.5cm of A1] (B1) {$B$};
	\node[circle, draw, below = 0.5cm of B1] (C1) {$C$};
	\factor{below right}{A1}{0.25cm and 0.5cm}{270}{$\phi_1$}{f1_1}
	\factor{below right}{B1}{0.25cm and 0.5cm}{270}{$\phi_2$}{f2_1}

	\nodecolorshift{myyellow}{A1}{A1col}{-2.1mm}{1mm}
	\nodecolorshift{myyellow}{B1}{B1col}{-2.1mm}{1mm}
	\nodecolorshift{myyellow}{C1}{C1col}{-2.1mm}{1mm}

	\factorcolor{myyellow}{f1_1}{f1_1col1}
	\factorcolorshift{myyellow}{f1_1}{f1_1col2}{2.1mm}
	\factorcolorshift{myblue}{f1_1}{f1_1col3}{4.2mm}
	\factorcolor{myyellow}{f2_1}{f2_1col1}
	\factorcolorshift{myyellow}{f2_1}{f2_1col2}{2.1mm}
	\factorcolorshift{myblue}{f2_1}{f2_1col3}{4.2mm}

	\coordinate[right=0.1cm of A1, yshift=-0.1cm] (CA1);
	\coordinate[above=0.2cm of f1_1, yshift=-0.1cm] (Cf1_1);
	\coordinate[right=0.1cm of B1, yshift=0.12cm] (CB1);
	\coordinate[right=0.1cm of B1, yshift=-0.1cm] (CB1_1);
	\coordinate[below=0.2cm of f1_1, yshift=0.15cm] (Cf1_1b);
	\coordinate[above=0.2cm of f2_1, yshift=-0.1cm] (Cf2_1);
	\coordinate[right=0.1cm of C1, yshift=0.12cm] (CC1);
	\coordinate[below=0.2cm of f2_1, yshift=0.15cm] (Cf2_1b);

	\begin{pgfonlayer}{bg}
		\draw (A1) -- (f1_1);
		\draw [arc, gray] (CA1) -- (Cf1_1);
		\draw (B1) -- (f1_1);
		\draw [arc, gray] (CB1) -- (Cf1_1b);
		\draw (B1) -- (f2_1);
		\draw [arc, gray] (CB1_1) -- (Cf2_1);
		\draw (C1) -- (f2_1);
		\draw [arc, gray] (CC1) -- (Cf2_1b);
	\end{pgfonlayer}

	%%% Color Passing: Recoloring %%%
	\node[circle, draw, right = 1.4cm of A1] (A2) {$A$};
	\node[circle, draw, below = 0.5cm of A2] (B2) {$B$};
	\node[circle, draw, below = 0.5cm of B2] (C2) {$C$};
	\factor{below right}{A2}{0.25cm and 0.5cm}{270}{$\phi_1$}{f1_2}
	\factor{below right}{B2}{0.25cm and 0.5cm}{270}{$\phi_2$}{f2_2}

	\nodecolorshift{myyellow}{A2}{A2col}{-2.1mm}{1mm}
	\nodecolorshift{myyellow}{B2}{B2col}{-2.1mm}{1mm}
	\nodecolorshift{myyellow}{C2}{C2col}{-2.1mm}{1mm}

	\factorcolor{myblue}{f1_2}{f1_2col1}
	\factorcolor{myblue}{f2_2}{f2_2col1}

	\draw (A2) -- (f1_2);
	\draw (B2) -- (f1_2);
	\draw (B2) -- (f2_2);
	\draw (C2) -- (f2_2);

	%%% Color Passing: Factor to node %%%
	\node[circle, draw, right = 1.4cm of A2] (A3) {$A$};
	\node[circle, draw, below = 0.5cm of A3] (B3) {$B$};
	\node[circle, draw, below = 0.5cm of B3] (C3) {$C$};
	\factor{below right}{A3}{0.25cm and 0.5cm}{270}{$\phi_1$}{f1_3}
	\factor{below right}{B3}{0.25cm and 0.5cm}{270}{$\phi_2$}{f2_3}

	\nodecolorshift{myblue}{A3}{A3col1}{-4.2mm}{1mm}
	\nodecolorshift{myyellow}{A3}{A3col2}{-2.1mm}{1mm}
	\nodecolorshift{myblue}{B3}{B3col1}{-6.3mm}{1mm}
	\nodecolorshift{myblue}{B3}{B3col2}{-4.2mm}{1mm}
	\nodecolorshift{myyellow}{B3}{B3col3}{-2.1mm}{1mm}
	\nodecolorshift{myblue}{C3}{C3col1}{-4.2mm}{1mm}
	\nodecolorshift{myyellow}{C3}{C3col2}{-2.1mm}{1mm}

	\factorcolor{myblue}{f1_3}{f1_3col1}
	\factorcolor{myblue}{f2_3}{f2_3col1}

	\coordinate[right=0.1cm of A3, yshift=-0.1cm] (CA3);
	\coordinate[above=0.2cm of f1_3, yshift=-0.1cm] (Cf1_3);
	\coordinate[right=0.1cm of B3, yshift=0.12cm] (CB3);
	\coordinate[right=0.1cm of B3, yshift=-0.1cm] (CB1_3);
	\coordinate[below=0.2cm of f1_3, yshift=0.15cm] (Cf1_3b);
	\coordinate[above=0.2cm of f2_3, yshift=-0.1cm] (Cf2_3);
	\coordinate[right=0.1cm of C3, yshift=0.12cm] (CC3);
	\coordinate[below=0.2cm of f2_3, yshift=0.15cm] (Cf2_3b);

	\begin{pgfonlayer}{bg}
		\draw (A3) -- (f1_3);
		\draw [arc, gray] (Cf1_3) -- (CA3);
		\draw (B3) -- (f1_3);
		\draw [arc, gray] (Cf1_3b) -- (CB3);
		\draw (B3) -- (f2_3);
		\draw [arc, gray] (Cf2_3) -- (CB1_3);
		\draw (C3) -- (f2_3);
		\draw [arc, gray] (Cf2_3b) -- (CC3);
	\end{pgfonlayer}

	%%% Color Passing: Recoloring %%%
	\node[circle, draw, right = 1.4cm of A3] (A4) {$A$};
	\node[circle, draw, below = 0.5cm of A4] (B4) {$B$};
	\node[circle, draw, below = 0.5cm of B4] (C4) {$C$};
	\factor{below right}{A4}{0.25cm and 0.5cm}{270}{$\phi_1$}{f1_4}
	\factor{below right}{B4}{0.25cm and 0.5cm}{270}{$\phi_2$}{f2_4}

	\nodecolorshift{myyellow}{A4}{A4col}{-2.1mm}{1mm}
	\nodecolorshift{mygreen}{B4}{B4col}{-2.1mm}{1mm}
	\nodecolorshift{myyellow}{C4}{C4col}{-2.1mm}{1mm}

	\factorcolor{myblue}{f1_4}{f1_4col1}
	\factorcolor{myblue}{f2_4}{f2_4col1}

	\draw (A4) -- (f1_4);
	\draw (B4) -- (f1_4);
	\draw (B4) -- (f2_4);
	\draw (C4) -- (f2_4);

	%%% Resulting compressed model %%%
	\pfs{right}{B4}{2.9cm}{270}{$\phi'_1$}{f12a}{f12}{f12b}

	\node[ellipse, inner sep = 1.2pt, draw, above left = 0.25cm and 0.5cm of f12] (AC) {$R(X)$};
	\node[circle, draw] (B) [below left = 0.25cm and 0.7cm of f12] {$B$};

	\begin{pgfonlayer}{bg}
		\draw (AC) -- (f12);
		\draw (B) -- (f12);
	\end{pgfonlayer}

	%%% Tables %%%
	\node[below = 0.5cm of C2, xshift=1.5cm] (tab_f2) {
		\begin{tabular}{c|c|c}
			$C$   & $B$   & $\phi_2(C,B)$ \\ \hline
			true  & true  & $\varphi_1$ \\
			true  & false & $\varphi_2$ \\
			false & true  & $\varphi_3$ \\
			false & false & $\varphi_4$ \\
		\end{tabular}
	};

	\node[left = 0.4cm of tab_f2] (tab_f1) {
		\begin{tabular}{c|c|c}
			$A$   & $B$   & $\phi_1(A,B)$ \\ \hline
			true  & true  & $\varphi_1$ \\
			true  & false & $\varphi_2$ \\
			false & true  & $\varphi_3$ \\
			false & false & $\varphi_4$ \\
		\end{tabular}
	};

	\node[right = 0.4cm of tab_f2] (tab_f12) {
		\begin{tabular}{c|c|c}
			$R(X)$   & $B$   & $\phi'_1(R(X),B)$ \\ \hline
			true  & true  & $\varphi_1$ \\
			true  & false & $\varphi_2$ \\
			false & true  & $\varphi_3$ \\
			false & false & $\varphi_4$ \\
		\end{tabular}
	};
\end{tikzpicture}
	\caption{A visualisation of the steps undertaken by the \ac{cpr} algorithm on an input \ac{fg} with only Boolean \acp{rv} and no evidence (left). Colours are first passed from variable nodes to factor nodes, followed by a recolouring, and then passed back from factor nodes to variable nodes, again followed by a recolouring. The colour passing procedure is iterated until convergence and the resulting \ac{pfg} is depicted on the right. This figure is reprinted from~\protect\citep{Luttermann2024a}.}
	\label{fig:acp_example}
\end{figure}

\Cref{fig:acp_example} illustrates the \ac{cpr} algorithm on an example \ac{fg}~\citep{Ahmadi2013a}.
In this example, $A$, $B$, and $C$ are Boolean \acp{rv} with no evidence and thus, they all receive the same colour (e.g., $\mathrm{yellow}$).
As the potentials of $\phi_1$ and $\phi_2$ are identical, $\phi_1$ and $\phi_2$ are assigned the same colour as well (e.g., $\mathrm{blue}$).
The colour passing then starts from variable nodes to factor nodes, that is, $A$ and $B$ send their colour ($\mathrm{yellow}$) to $\phi_1$ and $B$ and $C$ send their colour ($\mathrm{yellow}$) to $\phi_2$.
$\phi_1$ and $\phi_2$ are then recoloured according to the colours they received from their neighbours to reduce the communication overhead.
Since $\phi_1$ and $\phi_2$ received identical colours (two times the colour $\mathrm{yellow}$), they are assigned the same colour during recolouring.
Afterwards, the colours are passed from factor nodes to variable nodes and this time not only the colours but also the position of the \acp{rv} in the argument list of the corresponding factor are shared because none of the factors is commutative with respect to a subset of its arguments having size at least two.
Consequently, $\phi_1$ sends a tuple $(\mathrm{blue}, 1)$ to $A$ and a tuple $(\mathrm{blue}, 2)$ to $B$, and $\phi_2$ sends a tuple $(\mathrm{blue}, 2)$ to $B$ and a tuple $(\mathrm{blue}, 1)$ to $C$ (positions are not shown in \cref{fig:acp_example}).
As $A$ and $C$ are both at position one in the argument list of their respective neighbouring factor, they receive identical messages and are recoloured with the same colour.
$B$ is assigned a different colour during recolouring than $A$ and $C$ because $B$ received different messages than $A$ and $C$.
The groupings do not change in further iterations and hence the algorithm terminates.
The output is the \ac{pfg} shown on the right in \cref{fig:acp_example}, where both $A$ and $C$ as well as $\phi_1$ and $\phi_2$ are grouped.

More details about the colour passing procedure and the grouping of nodes can be found in~\citep{Luttermann2024a}.
The authors also provide extensive results demonstrating the benefits of constructing and using a lifted representation for probabilistic inference.

\section{Additional Experimental Results} \label{appendix:com_further_results}
In addition to the experimental results provided in \cref{sec:com_eval}, we provide further experimental results for individual scenarios in this section.
In particular, the plot given in \cref{fig:plot-avg} displays average run times over multiple runs for different choices of the number of commutative arguments $k$.
We now showcase separate plots for each of the choices of the number of commutative arguments $k$ to highlight the influence of $k$ on the performance on the algorithms.
Again, we compare the run times of the \enquote{naive} algorithm to the run times of \ac{decor} algorithm on factors with $n \in \{2,4,6,8,10,12,14,16\}$ Boolean arguments and set a timeout of five minutes per instance.

\begin{figure}
	\centering
	% Created by tikzDevice version 0.12.5 on 2024-05-08 13:26:49
% !TEX encoding = UTF-8 Unicode
\begin{tikzpicture}[x=1pt,y=1pt]
\definecolor{fillColor}{RGB}{255,255,255}
\path[use as bounding box,fill=fillColor,fill opacity=0.00] (0,0) rectangle (209.58,115.63);
\begin{scope}
\path[clip] (  0.00,  0.00) rectangle (209.58,115.63);
\definecolor{drawColor}{RGB}{255,255,255}
\definecolor{fillColor}{RGB}{255,255,255}

\path[draw=drawColor,line width= 0.6pt,line join=round,line cap=round,fill=fillColor] (  0.00,  0.00) rectangle (209.58,115.63);
\end{scope}
\begin{scope}
\path[clip] ( 44.91, 30.69) rectangle (204.08,110.13);
\definecolor{fillColor}{RGB}{255,255,255}

\path[fill=fillColor] ( 44.91, 30.69) rectangle (204.08,110.13);
\definecolor{drawColor}{RGB}{247,192,26}

\path[draw=drawColor,line width= 0.6pt,line join=round] ( 52.14, 34.30) --
	( 72.82, 39.62) --
	( 93.49, 46.24) --
	(114.16, 53.74) --
	(134.83, 60.10) --
	(155.50, 69.94) --
	(176.18, 74.07) --
	(196.85, 81.14);
\definecolor{drawColor}{RGB}{37,122,164}

\path[draw=drawColor,line width= 0.6pt,dash pattern=on 2pt off 2pt ,line join=round] ( 52.14, 34.93) --
	( 72.82, 51.76) --
	( 93.49, 65.90) --
	(114.16, 79.61) --
	(134.83, 93.08) --
	(155.50,106.52);
\definecolor{fillColor}{RGB}{247,192,26}

\path[fill=fillColor] (176.18, 74.07) circle (  1.96);

\path[fill=fillColor] ( 72.82, 39.62) circle (  1.96);

\path[fill=fillColor] (155.50, 69.94) circle (  1.96);

\path[fill=fillColor] (134.83, 60.10) circle (  1.96);

\path[fill=fillColor] ( 52.14, 34.30) circle (  1.96);

\path[fill=fillColor] ( 93.49, 46.24) circle (  1.96);

\path[fill=fillColor] (196.85, 81.14) circle (  1.96);

\path[fill=fillColor] (114.16, 53.74) circle (  1.96);
\definecolor{fillColor}{RGB}{37,122,164}

\path[fill=fillColor] ( 72.82, 54.82) --
	( 75.46, 50.24) --
	( 70.17, 50.24) --
	cycle;

\path[fill=fillColor] (155.50,109.57) --
	(158.15,105.00) --
	(152.86,105.00) --
	cycle;

\path[fill=fillColor] (134.83, 96.13) --
	(137.47, 91.56) --
	(132.19, 91.56) --
	cycle;

\path[fill=fillColor] ( 52.14, 37.98) --
	( 54.79, 33.41) --
	( 49.50, 33.41) --
	cycle;

\path[fill=fillColor] ( 93.49, 68.96) --
	( 96.13, 64.38) --
	( 90.85, 64.38) --
	cycle;

\path[fill=fillColor] (114.16, 82.67) --
	(116.80, 78.09) --
	(111.52, 78.09) --
	cycle;
\end{scope}
\begin{scope}
\path[clip] (  0.00,  0.00) rectangle (209.58,115.63);
\definecolor{drawColor}{RGB}{0,0,0}

\path[draw=drawColor,line width= 0.6pt,line join=round] ( 44.91, 30.69) --
	( 44.91,110.13);

\path[draw=drawColor,line width= 0.6pt,line join=round] ( 46.33,107.67) --
	( 44.91,110.13) --
	( 43.49,107.67);
\end{scope}
\begin{scope}
\path[clip] (  0.00,  0.00) rectangle (209.58,115.63);
\definecolor{drawColor}{gray}{0.30}

\node[text=drawColor,anchor=base east,inner sep=0pt, outer sep=0pt, scale=  0.88] at ( 39.96, 52.36) {1};

\node[text=drawColor,anchor=base east,inner sep=0pt, outer sep=0pt, scale=  0.88] at ( 39.96, 73.61) {100};

\node[text=drawColor,anchor=base east,inner sep=0pt, outer sep=0pt, scale=  0.88] at ( 39.96, 94.87) {10000};
\end{scope}
\begin{scope}
\path[clip] (  0.00,  0.00) rectangle (209.58,115.63);
\definecolor{drawColor}{gray}{0.20}

\path[draw=drawColor,line width= 0.6pt,line join=round] ( 42.16, 55.39) --
	( 44.91, 55.39);

\path[draw=drawColor,line width= 0.6pt,line join=round] ( 42.16, 76.64) --
	( 44.91, 76.64);

\path[draw=drawColor,line width= 0.6pt,line join=round] ( 42.16, 97.90) --
	( 44.91, 97.90);
\end{scope}
\begin{scope}
\path[clip] (  0.00,  0.00) rectangle (209.58,115.63);
\definecolor{drawColor}{RGB}{0,0,0}

\path[draw=drawColor,line width= 0.6pt,line join=round] ( 44.91, 30.69) --
	(204.08, 30.69);

\path[draw=drawColor,line width= 0.6pt,line join=round] (201.62, 29.26) --
	(204.08, 30.69) --
	(201.62, 32.11);
\end{scope}
\begin{scope}
\path[clip] (  0.00,  0.00) rectangle (209.58,115.63);
\definecolor{drawColor}{gray}{0.20}

\path[draw=drawColor,line width= 0.6pt,line join=round] ( 72.82, 27.94) --
	( 72.82, 30.69);

\path[draw=drawColor,line width= 0.6pt,line join=round] (114.16, 27.94) --
	(114.16, 30.69);

\path[draw=drawColor,line width= 0.6pt,line join=round] (155.50, 27.94) --
	(155.50, 30.69);

\path[draw=drawColor,line width= 0.6pt,line join=round] (196.85, 27.94) --
	(196.85, 30.69);
\end{scope}
\begin{scope}
\path[clip] (  0.00,  0.00) rectangle (209.58,115.63);
\definecolor{drawColor}{gray}{0.30}

\node[text=drawColor,anchor=base,inner sep=0pt, outer sep=0pt, scale=  0.88] at ( 72.82, 19.68) {4};

\node[text=drawColor,anchor=base,inner sep=0pt, outer sep=0pt, scale=  0.88] at (114.16, 19.68) {8};

\node[text=drawColor,anchor=base,inner sep=0pt, outer sep=0pt, scale=  0.88] at (155.50, 19.68) {12};

\node[text=drawColor,anchor=base,inner sep=0pt, outer sep=0pt, scale=  0.88] at (196.85, 19.68) {16};
\end{scope}
\begin{scope}
\path[clip] (  0.00,  0.00) rectangle (209.58,115.63);
\definecolor{drawColor}{RGB}{0,0,0}

\node[text=drawColor,anchor=base,inner sep=0pt, outer sep=0pt, scale=  1.10] at (124.50,  7.64) {$n$};
\end{scope}
\begin{scope}
\path[clip] (  0.00,  0.00) rectangle (209.58,115.63);
\definecolor{drawColor}{RGB}{0,0,0}

\node[text=drawColor,rotate= 90.00,anchor=base,inner sep=0pt, outer sep=0pt, scale=  1.10] at ( 13.08, 70.41) {time (ms)};
\end{scope}
\begin{scope}
\path[clip] (  0.00,  0.00) rectangle (209.58,115.63);

\path[] ( 41.77, 74.29) rectangle (102.16,114.20);
\end{scope}
\begin{scope}
\path[clip] (  0.00,  0.00) rectangle (209.58,115.63);
\definecolor{drawColor}{RGB}{247,192,26}

\path[draw=drawColor,line width= 0.6pt,line join=round] ( 48.72,101.47) -- ( 60.28,101.47);
\end{scope}
\begin{scope}
\path[clip] (  0.00,  0.00) rectangle (209.58,115.63);
\definecolor{fillColor}{RGB}{247,192,26}

\path[fill=fillColor] ( 54.50,101.47) circle (  1.96);
\end{scope}
\begin{scope}
\path[clip] (  0.00,  0.00) rectangle (209.58,115.63);
\definecolor{drawColor}{RGB}{37,122,164}

\path[draw=drawColor,line width= 0.6pt,dash pattern=on 2pt off 2pt ,line join=round] ( 48.72, 87.02) -- ( 60.28, 87.02);
\end{scope}
\begin{scope}
\path[clip] (  0.00,  0.00) rectangle (209.58,115.63);
\definecolor{fillColor}{RGB}{37,122,164}

\path[fill=fillColor] ( 54.50, 90.07) --
	( 57.14, 85.49) --
	( 51.86, 85.49) --
	cycle;
\end{scope}
\begin{scope}
\path[clip] (  0.00,  0.00) rectangle (209.58,115.63);
\definecolor{drawColor}{RGB}{0,0,0}

\node[text=drawColor,anchor=base west,inner sep=0pt, outer sep=0pt, scale=  0.80] at ( 67.23, 98.71) {DECOR};
\end{scope}
\begin{scope}
\path[clip] (  0.00,  0.00) rectangle (209.58,115.63);
\definecolor{drawColor}{RGB}{0,0,0}

\node[text=drawColor,anchor=base west,inner sep=0pt, outer sep=0pt, scale=  0.80] at ( 67.23, 84.26) {Naive};
\end{scope}
\end{tikzpicture}
	% Created by tikzDevice version 0.12.5 on 2024-05-08 13:26:49
% !TEX encoding = UTF-8 Unicode
\begin{tikzpicture}[x=1pt,y=1pt]
\definecolor{fillColor}{RGB}{255,255,255}
\path[use as bounding box,fill=fillColor,fill opacity=0.00] (0,0) rectangle (209.58,115.63);
\begin{scope}
\path[clip] (  0.00,  0.00) rectangle (209.58,115.63);
\definecolor{drawColor}{RGB}{255,255,255}
\definecolor{fillColor}{RGB}{255,255,255}

\path[draw=drawColor,line width= 0.6pt,line join=round,line cap=round,fill=fillColor] (  0.00,  0.00) rectangle (209.58,115.63);
\end{scope}
\begin{scope}
\path[clip] ( 44.91, 30.69) rectangle (204.08,110.13);
\definecolor{fillColor}{RGB}{255,255,255}

\path[fill=fillColor] ( 44.91, 30.69) rectangle (204.08,110.13);
\definecolor{drawColor}{RGB}{247,192,26}

\path[draw=drawColor,line width= 0.6pt,line join=round] ( 52.14, 35.28) --
	( 72.82, 41.67) --
	( 93.49, 48.96) --
	(114.16, 56.82) --
	(134.83, 64.82) --
	(155.50, 74.11) --
	(176.18, 84.97) --
	(196.85, 96.88);
\definecolor{drawColor}{RGB}{37,122,164}

\path[draw=drawColor,line width= 0.6pt,dash pattern=on 2pt off 2pt ,line join=round] ( 52.14, 34.30) --
	( 72.82, 50.18) --
	( 93.49, 64.91) --
	(114.16, 79.17) --
	(134.83, 92.95) --
	(155.50,106.52);
\definecolor{fillColor}{RGB}{247,192,26}

\path[fill=fillColor] (176.18, 84.97) circle (  1.96);

\path[fill=fillColor] ( 72.82, 41.67) circle (  1.96);

\path[fill=fillColor] (155.50, 74.11) circle (  1.96);

\path[fill=fillColor] (134.83, 64.82) circle (  1.96);

\path[fill=fillColor] ( 52.14, 35.28) circle (  1.96);

\path[fill=fillColor] ( 93.49, 48.96) circle (  1.96);

\path[fill=fillColor] (196.85, 96.88) circle (  1.96);

\path[fill=fillColor] (114.16, 56.82) circle (  1.96);
\definecolor{fillColor}{RGB}{37,122,164}

\path[fill=fillColor] ( 72.82, 53.24) --
	( 75.46, 48.66) --
	( 70.17, 48.66) --
	cycle;

\path[fill=fillColor] (155.50,109.57) --
	(158.15,105.00) --
	(152.86,105.00) --
	cycle;

\path[fill=fillColor] (134.83, 96.00) --
	(137.47, 91.42) --
	(132.19, 91.42) --
	cycle;

\path[fill=fillColor] ( 52.14, 37.35) --
	( 54.79, 32.77) --
	( 49.50, 32.77) --
	cycle;

\path[fill=fillColor] ( 93.49, 67.97) --
	( 96.13, 63.39) --
	( 90.85, 63.39) --
	cycle;

\path[fill=fillColor] (114.16, 82.22) --
	(116.80, 77.64) --
	(111.52, 77.64) --
	cycle;
\end{scope}
\begin{scope}
\path[clip] (  0.00,  0.00) rectangle (209.58,115.63);
\definecolor{drawColor}{RGB}{0,0,0}

\path[draw=drawColor,line width= 0.6pt,line join=round] ( 44.91, 30.69) --
	( 44.91,110.13);

\path[draw=drawColor,line width= 0.6pt,line join=round] ( 46.33,107.67) --
	( 44.91,110.13) --
	( 43.49,107.67);
\end{scope}
\begin{scope}
\path[clip] (  0.00,  0.00) rectangle (209.58,115.63);
\definecolor{drawColor}{gray}{0.30}

\node[text=drawColor,anchor=base east,inner sep=0pt, outer sep=0pt, scale=  0.88] at ( 39.96, 51.72) {1};

\node[text=drawColor,anchor=base east,inner sep=0pt, outer sep=0pt, scale=  0.88] at ( 39.96, 73.23) {100};

\node[text=drawColor,anchor=base east,inner sep=0pt, outer sep=0pt, scale=  0.88] at ( 39.96, 94.73) {10000};
\end{scope}
\begin{scope}
\path[clip] (  0.00,  0.00) rectangle (209.58,115.63);
\definecolor{drawColor}{gray}{0.20}

\path[draw=drawColor,line width= 0.6pt,line join=round] ( 42.16, 54.75) --
	( 44.91, 54.75);

\path[draw=drawColor,line width= 0.6pt,line join=round] ( 42.16, 76.26) --
	( 44.91, 76.26);

\path[draw=drawColor,line width= 0.6pt,line join=round] ( 42.16, 97.76) --
	( 44.91, 97.76);
\end{scope}
\begin{scope}
\path[clip] (  0.00,  0.00) rectangle (209.58,115.63);
\definecolor{drawColor}{RGB}{0,0,0}

\path[draw=drawColor,line width= 0.6pt,line join=round] ( 44.91, 30.69) --
	(204.08, 30.69);

\path[draw=drawColor,line width= 0.6pt,line join=round] (201.62, 29.26) --
	(204.08, 30.69) --
	(201.62, 32.11);
\end{scope}
\begin{scope}
\path[clip] (  0.00,  0.00) rectangle (209.58,115.63);
\definecolor{drawColor}{gray}{0.20}

\path[draw=drawColor,line width= 0.6pt,line join=round] ( 72.82, 27.94) --
	( 72.82, 30.69);

\path[draw=drawColor,line width= 0.6pt,line join=round] (114.16, 27.94) --
	(114.16, 30.69);

\path[draw=drawColor,line width= 0.6pt,line join=round] (155.50, 27.94) --
	(155.50, 30.69);

\path[draw=drawColor,line width= 0.6pt,line join=round] (196.85, 27.94) --
	(196.85, 30.69);
\end{scope}
\begin{scope}
\path[clip] (  0.00,  0.00) rectangle (209.58,115.63);
\definecolor{drawColor}{gray}{0.30}

\node[text=drawColor,anchor=base,inner sep=0pt, outer sep=0pt, scale=  0.88] at ( 72.82, 19.68) {4};

\node[text=drawColor,anchor=base,inner sep=0pt, outer sep=0pt, scale=  0.88] at (114.16, 19.68) {8};

\node[text=drawColor,anchor=base,inner sep=0pt, outer sep=0pt, scale=  0.88] at (155.50, 19.68) {12};

\node[text=drawColor,anchor=base,inner sep=0pt, outer sep=0pt, scale=  0.88] at (196.85, 19.68) {16};
\end{scope}
\begin{scope}
\path[clip] (  0.00,  0.00) rectangle (209.58,115.63);
\definecolor{drawColor}{RGB}{0,0,0}

\node[text=drawColor,anchor=base,inner sep=0pt, outer sep=0pt, scale=  1.10] at (124.50,  7.64) {$n$};
\end{scope}
\begin{scope}
\path[clip] (  0.00,  0.00) rectangle (209.58,115.63);
\definecolor{drawColor}{RGB}{0,0,0}

\node[text=drawColor,rotate= 90.00,anchor=base,inner sep=0pt, outer sep=0pt, scale=  1.10] at ( 13.08, 70.41) {time (ms)};
\end{scope}
\begin{scope}
\path[clip] (  0.00,  0.00) rectangle (209.58,115.63);

\path[] ( 41.77, 74.29) rectangle (102.16,114.20);
\end{scope}
\begin{scope}
\path[clip] (  0.00,  0.00) rectangle (209.58,115.63);
\definecolor{drawColor}{RGB}{247,192,26}

\path[draw=drawColor,line width= 0.6pt,line join=round] ( 48.72,101.47) -- ( 60.28,101.47);
\end{scope}
\begin{scope}
\path[clip] (  0.00,  0.00) rectangle (209.58,115.63);
\definecolor{fillColor}{RGB}{247,192,26}

\path[fill=fillColor] ( 54.50,101.47) circle (  1.96);
\end{scope}
\begin{scope}
\path[clip] (  0.00,  0.00) rectangle (209.58,115.63);
\definecolor{drawColor}{RGB}{37,122,164}

\path[draw=drawColor,line width= 0.6pt,dash pattern=on 2pt off 2pt ,line join=round] ( 48.72, 87.02) -- ( 60.28, 87.02);
\end{scope}
\begin{scope}
\path[clip] (  0.00,  0.00) rectangle (209.58,115.63);
\definecolor{fillColor}{RGB}{37,122,164}

\path[fill=fillColor] ( 54.50, 90.07) --
	( 57.14, 85.49) --
	( 51.86, 85.49) --
	cycle;
\end{scope}
\begin{scope}
\path[clip] (  0.00,  0.00) rectangle (209.58,115.63);
\definecolor{drawColor}{RGB}{0,0,0}

\node[text=drawColor,anchor=base west,inner sep=0pt, outer sep=0pt, scale=  0.80] at ( 67.23, 98.71) {DECOR};
\end{scope}
\begin{scope}
\path[clip] (  0.00,  0.00) rectangle (209.58,115.63);
\definecolor{drawColor}{RGB}{0,0,0}

\node[text=drawColor,anchor=base west,inner sep=0pt, outer sep=0pt, scale=  0.80] at ( 67.23, 84.26) {Naive};
\end{scope}
\end{tikzpicture}
	\caption{Run times of \ac{decor} and the \enquote{naive} algorithm for factors with different numbers of arguments $n$, of which $k=0$ (left) and $k=2$ (right) arguments are commutative, respectively. Both plots use a logarithmic scale on the y-axis.}
	\label{fig:plot-k=0-k=2}
\end{figure}
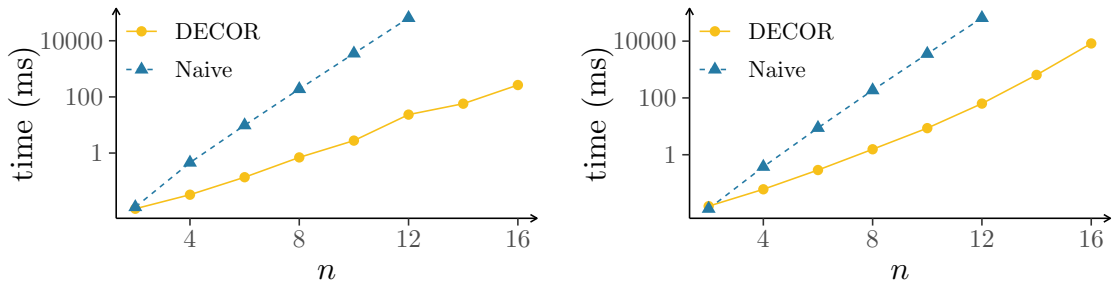

\Cref{fig:plot-k=0-k=2} shows the run times of both algorithms on factors with $k=0$ (left plot) and $k=2$ (right plot) commutative arguments on a logarithmic scale.
The performance of the \enquote{naive} algorithm is similar for $k=0$ and $k=2$.
As in \cref{fig:plot-avg}, the \enquote{naive} algorithm is able to solve instances with less than ten arguments in under a second but then faces serious scalability issues, eventually leading to timeouts for all factors with $n > 12$.
\Ac{decor} solves all instances within the specified timeout and while the run times of \ac{decor} for $k=2$ are similar to those in \cref{fig:plot-avg}, \ac{decor} is especially fast on instances where $k=0$ of the arguments are commutative.
This can be explained by the fact that all instances with $k=0$ are generated such that each potential value is unique to ensure that there are no commutative arguments.
Consequently, if there are only unique values in each bucket, \ac{decor} has no candidates to check and returns immediately (the run time still increases the larger $n$ gets because computing the buckets becomes more costly).

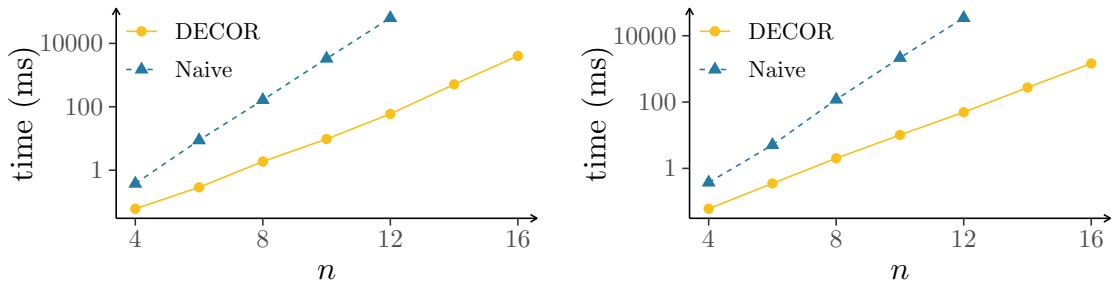
\begin{figure}
	\centering
	% Created by tikzDevice version 0.12.5 on 2024-05-08 13:26:49
% !TEX encoding = UTF-8 Unicode
\begin{tikzpicture}[x=1pt,y=1pt]
\definecolor{fillColor}{RGB}{255,255,255}
\path[use as bounding box,fill=fillColor,fill opacity=0.00] (0,0) rectangle (209.58,115.63);
\begin{scope}
\path[clip] (  0.00,  0.00) rectangle (209.58,115.63);
\definecolor{drawColor}{RGB}{255,255,255}
\definecolor{fillColor}{RGB}{255,255,255}

\path[draw=drawColor,line width= 0.6pt,line join=round,line cap=round,fill=fillColor] (  0.00,  0.00) rectangle (209.58,115.63);
\end{scope}
\begin{scope}
\path[clip] ( 44.91, 30.69) rectangle (204.08,110.13);
\definecolor{fillColor}{RGB}{255,255,255}

\path[fill=fillColor] ( 44.91, 30.69) rectangle (204.08,110.13);
\definecolor{drawColor}{RGB}{247,192,26}

\path[draw=drawColor,line width= 0.6pt,line join=round] ( 52.14, 34.30) --
	( 76.26, 42.42) --
	(100.38, 52.18) --
	(124.50, 60.70) --
	(148.61, 70.21) --
	(172.73, 81.38) --
	(196.85, 92.16);
\definecolor{drawColor}{RGB}{37,122,164}

\path[draw=drawColor,line width= 0.6pt,dash pattern=on 2pt off 2pt ,line join=round] ( 52.14, 43.80) --
	( 76.26, 60.24) --
	(100.38, 75.49) --
	(124.50, 91.15) --
	(148.61,106.52);
\definecolor{fillColor}{RGB}{247,192,26}

\path[fill=fillColor] (172.73, 81.38) circle (  1.96);

\path[fill=fillColor] ( 52.14, 34.30) circle (  1.96);

\path[fill=fillColor] (148.61, 70.21) circle (  1.96);

\path[fill=fillColor] (124.50, 60.70) circle (  1.96);

\path[fill=fillColor] ( 76.26, 42.42) circle (  1.96);

\path[fill=fillColor] (196.85, 92.16) circle (  1.96);

\path[fill=fillColor] (100.38, 52.18) circle (  1.96);
\definecolor{fillColor}{RGB}{37,122,164}

\path[fill=fillColor] ( 52.14, 46.85) --
	( 54.79, 42.27) --
	( 49.50, 42.27) --
	cycle;

\path[fill=fillColor] (148.61,109.57) --
	(151.26,105.00) --
	(145.97,105.00) --
	cycle;

\path[fill=fillColor] (124.50, 94.20) --
	(127.14, 89.62) --
	(121.85, 89.62) --
	cycle;

\path[fill=fillColor] ( 76.26, 63.29) --
	( 78.90, 58.71) --
	( 73.62, 58.71) --
	cycle;

\path[fill=fillColor] (100.38, 78.54) --
	(103.02, 73.97) --
	( 97.74, 73.97) --
	cycle;
\end{scope}
\begin{scope}
\path[clip] (  0.00,  0.00) rectangle (209.58,115.63);
\definecolor{drawColor}{RGB}{0,0,0}

\path[draw=drawColor,line width= 0.6pt,line join=round] ( 44.91, 30.69) --
	( 44.91,110.13);

\path[draw=drawColor,line width= 0.6pt,line join=round] ( 46.33,107.67) --
	( 44.91,110.13) --
	( 43.49,107.67);
\end{scope}
\begin{scope}
\path[clip] (  0.00,  0.00) rectangle (209.58,115.63);
\definecolor{drawColor}{gray}{0.30}

\node[text=drawColor,anchor=base east,inner sep=0pt, outer sep=0pt, scale=  0.88] at ( 39.96, 45.87) {1};

\node[text=drawColor,anchor=base east,inner sep=0pt, outer sep=0pt, scale=  0.88] at ( 39.96, 69.87) {100};

\node[text=drawColor,anchor=base east,inner sep=0pt, outer sep=0pt, scale=  0.88] at ( 39.96, 93.86) {10000};
\end{scope}
\begin{scope}
\path[clip] (  0.00,  0.00) rectangle (209.58,115.63);
\definecolor{drawColor}{gray}{0.20}

\path[draw=drawColor,line width= 0.6pt,line join=round] ( 42.16, 48.90) --
	( 44.91, 48.90);

\path[draw=drawColor,line width= 0.6pt,line join=round] ( 42.16, 72.90) --
	( 44.91, 72.90);

\path[draw=drawColor,line width= 0.6pt,line join=round] ( 42.16, 96.90) --
	( 44.91, 96.90);
\end{scope}
\begin{scope}
\path[clip] (  0.00,  0.00) rectangle (209.58,115.63);
\definecolor{drawColor}{RGB}{0,0,0}

\path[draw=drawColor,line width= 0.6pt,line join=round] ( 44.91, 30.69) --
	(204.08, 30.69);

\path[draw=drawColor,line width= 0.6pt,line join=round] (201.62, 29.26) --
	(204.08, 30.69) --
	(201.62, 32.11);
\end{scope}
\begin{scope}
\path[clip] (  0.00,  0.00) rectangle (209.58,115.63);
\definecolor{drawColor}{gray}{0.20}

\path[draw=drawColor,line width= 0.6pt,line join=round] ( 52.14, 27.94) --
	( 52.14, 30.69);

\path[draw=drawColor,line width= 0.6pt,line join=round] (100.38, 27.94) --
	(100.38, 30.69);

\path[draw=drawColor,line width= 0.6pt,line join=round] (148.61, 27.94) --
	(148.61, 30.69);

\path[draw=drawColor,line width= 0.6pt,line join=round] (196.85, 27.94) --
	(196.85, 30.69);
\end{scope}
\begin{scope}
\path[clip] (  0.00,  0.00) rectangle (209.58,115.63);
\definecolor{drawColor}{gray}{0.30}

\node[text=drawColor,anchor=base,inner sep=0pt, outer sep=0pt, scale=  0.88] at ( 52.14, 19.68) {4};

\node[text=drawColor,anchor=base,inner sep=0pt, outer sep=0pt, scale=  0.88] at (100.38, 19.68) {8};

\node[text=drawColor,anchor=base,inner sep=0pt, outer sep=0pt, scale=  0.88] at (148.61, 19.68) {12};

\node[text=drawColor,anchor=base,inner sep=0pt, outer sep=0pt, scale=  0.88] at (196.85, 19.68) {16};
\end{scope}
\begin{scope}
\path[clip] (  0.00,  0.00) rectangle (209.58,115.63);
\definecolor{drawColor}{RGB}{0,0,0}

\node[text=drawColor,anchor=base,inner sep=0pt, outer sep=0pt, scale=  1.10] at (124.50,  7.64) {$n$};
\end{scope}
\begin{scope}
\path[clip] (  0.00,  0.00) rectangle (209.58,115.63);
\definecolor{drawColor}{RGB}{0,0,0}

\node[text=drawColor,rotate= 90.00,anchor=base,inner sep=0pt, outer sep=0pt, scale=  1.10] at ( 13.08, 70.41) {time (ms)};
\end{scope}
\begin{scope}
\path[clip] (  0.00,  0.00) rectangle (209.58,115.63);

\path[] ( 41.77, 74.29) rectangle (102.16,114.20);
\end{scope}
\begin{scope}
\path[clip] (  0.00,  0.00) rectangle (209.58,115.63);
\definecolor{drawColor}{RGB}{247,192,26}

\path[draw=drawColor,line width= 0.6pt,line join=round] ( 48.72,101.47) -- ( 60.28,101.47);
\end{scope}
\begin{scope}
\path[clip] (  0.00,  0.00) rectangle (209.58,115.63);
\definecolor{fillColor}{RGB}{247,192,26}

\path[fill=fillColor] ( 54.50,101.47) circle (  1.96);
\end{scope}
\begin{scope}
\path[clip] (  0.00,  0.00) rectangle (209.58,115.63);
\definecolor{drawColor}{RGB}{37,122,164}

\path[draw=drawColor,line width= 0.6pt,dash pattern=on 2pt off 2pt ,line join=round] ( 48.72, 87.02) -- ( 60.28, 87.02);
\end{scope}
\begin{scope}
\path[clip] (  0.00,  0.00) rectangle (209.58,115.63);
\definecolor{fillColor}{RGB}{37,122,164}

\path[fill=fillColor] ( 54.50, 90.07) --
	( 57.14, 85.49) --
	( 51.86, 85.49) --
	cycle;
\end{scope}
\begin{scope}
\path[clip] (  0.00,  0.00) rectangle (209.58,115.63);
\definecolor{drawColor}{RGB}{0,0,0}

\node[text=drawColor,anchor=base west,inner sep=0pt, outer sep=0pt, scale=  0.80] at ( 67.23, 98.71) {DECOR};
\end{scope}
\begin{scope}
\path[clip] (  0.00,  0.00) rectangle (209.58,115.63);
\definecolor{drawColor}{RGB}{0,0,0}

\node[text=drawColor,anchor=base west,inner sep=0pt, outer sep=0pt, scale=  0.80] at ( 67.23, 84.26) {Naive};
\end{scope}
\end{tikzpicture}
	% Created by tikzDevice version 0.12.5 on 2024-05-08 13:26:49
% !TEX encoding = UTF-8 Unicode
\begin{tikzpicture}[x=1pt,y=1pt]
\definecolor{fillColor}{RGB}{255,255,255}
\path[use as bounding box,fill=fillColor,fill opacity=0.00] (0,0) rectangle (209.58,115.63);
\begin{scope}
\path[clip] (  0.00,  0.00) rectangle (209.58,115.63);
\definecolor{drawColor}{RGB}{255,255,255}
\definecolor{fillColor}{RGB}{255,255,255}

\path[draw=drawColor,line width= 0.6pt,line join=round,line cap=round,fill=fillColor] (  0.00,  0.00) rectangle (209.58,115.63);
\end{scope}
\begin{scope}
\path[clip] ( 44.91, 30.69) rectangle (204.08,110.13);
\definecolor{fillColor}{RGB}{255,255,255}

\path[fill=fillColor] ( 44.91, 30.69) rectangle (204.08,110.13);
\definecolor{drawColor}{RGB}{247,192,26}

\path[draw=drawColor,line width= 0.6pt,line join=round] ( 52.14, 34.30) --
	( 76.26, 43.88) --
	(100.38, 53.42) --
	(124.50, 62.22) --
	(148.61, 70.86) --
	(172.73, 80.18) --
	(196.85, 89.30);
\definecolor{drawColor}{RGB}{37,122,164}

\path[draw=drawColor,line width= 0.6pt,dash pattern=on 2pt off 2pt ,line join=round] ( 52.14, 44.24) --
	( 76.26, 58.44) --
	(100.38, 75.64) --
	(124.50, 91.43) --
	(148.61,106.52);
\definecolor{fillColor}{RGB}{247,192,26}

\path[fill=fillColor] (172.73, 80.18) circle (  1.96);

\path[fill=fillColor] ( 52.14, 34.30) circle (  1.96);

\path[fill=fillColor] (148.61, 70.86) circle (  1.96);

\path[fill=fillColor] (124.50, 62.22) circle (  1.96);

\path[fill=fillColor] ( 76.26, 43.88) circle (  1.96);

\path[fill=fillColor] (196.85, 89.30) circle (  1.96);

\path[fill=fillColor] (100.38, 53.42) circle (  1.96);
\definecolor{fillColor}{RGB}{37,122,164}

\path[fill=fillColor] ( 52.14, 47.29) --
	( 54.79, 42.71) --
	( 49.50, 42.71) --
	cycle;

\path[fill=fillColor] (148.61,109.57) --
	(151.26,105.00) --
	(145.97,105.00) --
	cycle;

\path[fill=fillColor] (124.50, 94.48) --
	(127.14, 89.90) --
	(121.85, 89.90) --
	cycle;

\path[fill=fillColor] ( 76.26, 61.49) --
	( 78.90, 56.91) --
	( 73.62, 56.91) --
	cycle;

\path[fill=fillColor] (100.38, 78.69) --
	(103.02, 74.11) --
	( 97.74, 74.11) --
	cycle;
\end{scope}
\begin{scope}
\path[clip] (  0.00,  0.00) rectangle (209.58,115.63);
\definecolor{drawColor}{RGB}{0,0,0}

\path[draw=drawColor,line width= 0.6pt,line join=round] ( 44.91, 30.69) --
	( 44.91,110.13);

\path[draw=drawColor,line width= 0.6pt,line join=round] ( 46.33,107.67) --
	( 44.91,110.13) --
	( 43.49,107.67);
\end{scope}
\begin{scope}
\path[clip] (  0.00,  0.00) rectangle (209.58,115.63);
\definecolor{drawColor}{gray}{0.30}

\node[text=drawColor,anchor=base east,inner sep=0pt, outer sep=0pt, scale=  0.88] at ( 39.96, 46.54) {1};

\node[text=drawColor,anchor=base east,inner sep=0pt, outer sep=0pt, scale=  0.88] at ( 39.96, 71.66) {100};

\node[text=drawColor,anchor=base east,inner sep=0pt, outer sep=0pt, scale=  0.88] at ( 39.96, 96.77) {10000};
\end{scope}
\begin{scope}
\path[clip] (  0.00,  0.00) rectangle (209.58,115.63);
\definecolor{drawColor}{gray}{0.20}

\path[draw=drawColor,line width= 0.6pt,line join=round] ( 42.16, 49.58) --
	( 44.91, 49.58);

\path[draw=drawColor,line width= 0.6pt,line join=round] ( 42.16, 74.69) --
	( 44.91, 74.69);

\path[draw=drawColor,line width= 0.6pt,line join=round] ( 42.16, 99.80) --
	( 44.91, 99.80);
\end{scope}
\begin{scope}
\path[clip] (  0.00,  0.00) rectangle (209.58,115.63);
\definecolor{drawColor}{RGB}{0,0,0}

\path[draw=drawColor,line width= 0.6pt,line join=round] ( 44.91, 30.69) --
	(204.08, 30.69);

\path[draw=drawColor,line width= 0.6pt,line join=round] (201.62, 29.26) --
	(204.08, 30.69) --
	(201.62, 32.11);
\end{scope}
\begin{scope}
\path[clip] (  0.00,  0.00) rectangle (209.58,115.63);
\definecolor{drawColor}{gray}{0.20}

\path[draw=drawColor,line width= 0.6pt,line join=round] ( 52.14, 27.94) --
	( 52.14, 30.69);

\path[draw=drawColor,line width= 0.6pt,line join=round] (100.38, 27.94) --
	(100.38, 30.69);

\path[draw=drawColor,line width= 0.6pt,line join=round] (148.61, 27.94) --
	(148.61, 30.69);

\path[draw=drawColor,line width= 0.6pt,line join=round] (196.85, 27.94) --
	(196.85, 30.69);
\end{scope}
\begin{scope}
\path[clip] (  0.00,  0.00) rectangle (209.58,115.63);
\definecolor{drawColor}{gray}{0.30}

\node[text=drawColor,anchor=base,inner sep=0pt, outer sep=0pt, scale=  0.88] at ( 52.14, 19.68) {4};

\node[text=drawColor,anchor=base,inner sep=0pt, outer sep=0pt, scale=  0.88] at (100.38, 19.68) {8};

\node[text=drawColor,anchor=base,inner sep=0pt, outer sep=0pt, scale=  0.88] at (148.61, 19.68) {12};

\node[text=drawColor,anchor=base,inner sep=0pt, outer sep=0pt, scale=  0.88] at (196.85, 19.68) {16};
\end{scope}
\begin{scope}
\path[clip] (  0.00,  0.00) rectangle (209.58,115.63);
\definecolor{drawColor}{RGB}{0,0,0}

\node[text=drawColor,anchor=base,inner sep=0pt, outer sep=0pt, scale=  1.10] at (124.50,  7.64) {$n$};
\end{scope}
\begin{scope}
\path[clip] (  0.00,  0.00) rectangle (209.58,115.63);
\definecolor{drawColor}{RGB}{0,0,0}

\node[text=drawColor,rotate= 90.00,anchor=base,inner sep=0pt, outer sep=0pt, scale=  1.10] at ( 13.08, 70.41) {time (ms)};
\end{scope}
\begin{scope}
\path[clip] (  0.00,  0.00) rectangle (209.58,115.63);

\path[] ( 41.77, 74.29) rectangle (102.16,114.20);
\end{scope}
\begin{scope}
\path[clip] (  0.00,  0.00) rectangle (209.58,115.63);
\definecolor{drawColor}{RGB}{247,192,26}

\path[draw=drawColor,line width= 0.6pt,line join=round] ( 48.72,101.47) -- ( 60.28,101.47);
\end{scope}
\begin{scope}
\path[clip] (  0.00,  0.00) rectangle (209.58,115.63);
\definecolor{fillColor}{RGB}{247,192,26}

\path[fill=fillColor] ( 54.50,101.47) circle (  1.96);
\end{scope}
\begin{scope}
\path[clip] (  0.00,  0.00) rectangle (209.58,115.63);
\definecolor{drawColor}{RGB}{37,122,164}

\path[draw=drawColor,line width= 0.6pt,dash pattern=on 2pt off 2pt ,line join=round] ( 48.72, 87.02) -- ( 60.28, 87.02);
\end{scope}
\begin{scope}
\path[clip] (  0.00,  0.00) rectangle (209.58,115.63);
\definecolor{fillColor}{RGB}{37,122,164}

\path[fill=fillColor] ( 54.50, 90.07) --
	( 57.14, 85.49) --
	( 51.86, 85.49) --
	cycle;
\end{scope}
\begin{scope}
\path[clip] (  0.00,  0.00) rectangle (209.58,115.63);
\definecolor{drawColor}{RGB}{0,0,0}

\node[text=drawColor,anchor=base west,inner sep=0pt, outer sep=0pt, scale=  0.80] at ( 67.23, 98.71) {DECOR};
\end{scope}
\begin{scope}
\path[clip] (  0.00,  0.00) rectangle (209.58,115.63);
\definecolor{drawColor}{RGB}{0,0,0}

\node[text=drawColor,anchor=base west,inner sep=0pt, outer sep=0pt, scale=  0.80] at ( 67.23, 84.26) {Naive};
\end{scope}
\end{tikzpicture}
	\caption{Run times of \ac{decor} and the \enquote{naive} algorithm for factors with different numbers of arguments $n$, of which $k=\lfloor\log_2(n)\rfloor$ (left) and $k=\lfloor\frac{n}{2}\rfloor$ (right) arguments are commutative, respectively. Both plots use a logarithmic scale on the y-axis.}
	\label{fig:plot-k=log2n-k=ndiv2}
\end{figure}
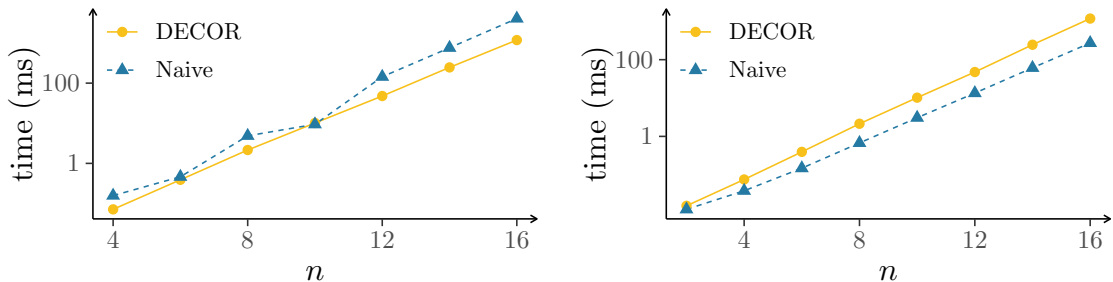
\begin{figure}
	\centering
	% Created by tikzDevice version 0.12.5 on 2024-05-08 13:26:50
% !TEX encoding = UTF-8 Unicode
\begin{tikzpicture}[x=1pt,y=1pt]
\definecolor{fillColor}{RGB}{255,255,255}
\path[use as bounding box,fill=fillColor,fill opacity=0.00] (0,0) rectangle (209.58,115.63);
\begin{scope}
\path[clip] (  0.00,  0.00) rectangle (209.58,115.63);
\definecolor{drawColor}{RGB}{255,255,255}
\definecolor{fillColor}{RGB}{255,255,255}

\path[draw=drawColor,line width= 0.6pt,line join=round,line cap=round,fill=fillColor] (  0.00,  0.00) rectangle (209.58,115.63);
\end{scope}
\begin{scope}
\path[clip] ( 36.11, 30.69) rectangle (204.08,110.13);
\definecolor{fillColor}{RGB}{255,255,255}

\path[fill=fillColor] ( 36.11, 30.69) rectangle (204.08,110.13);
\definecolor{drawColor}{RGB}{247,192,26}

\path[draw=drawColor,line width= 0.6pt,line join=round] ( 43.75, 34.30) --
	( 69.20, 45.49) --
	( 94.65, 56.76) --
	(120.10, 66.98) --
	(145.55, 77.20) --
	(171.00, 88.03) --
	(196.45, 98.38);
\definecolor{drawColor}{RGB}{37,122,164}

\path[draw=drawColor,line width= 0.6pt,dash pattern=on 2pt off 2pt ,line join=round] ( 43.75, 39.50) --
	( 69.20, 46.63) --
	( 94.65, 62.13) --
	(120.10, 66.49) --
	(145.55, 84.49) --
	(171.00, 95.45) --
	(196.45,106.52);
\definecolor{fillColor}{RGB}{247,192,26}

\path[fill=fillColor] (171.00, 88.03) circle (  1.96);

\path[fill=fillColor] ( 43.75, 34.30) circle (  1.96);

\path[fill=fillColor] (145.55, 77.20) circle (  1.96);

\path[fill=fillColor] (120.10, 66.98) circle (  1.96);

\path[fill=fillColor] ( 69.20, 45.49) circle (  1.96);

\path[fill=fillColor] (196.45, 98.38) circle (  1.96);

\path[fill=fillColor] ( 94.65, 56.76) circle (  1.96);
\definecolor{fillColor}{RGB}{37,122,164}

\path[fill=fillColor] (171.00, 98.50) --
	(173.64, 93.92) --
	(168.36, 93.92) --
	cycle;

\path[fill=fillColor] ( 43.75, 42.55) --
	( 46.39, 37.97) --
	( 41.10, 37.97) --
	cycle;

\path[fill=fillColor] (145.55, 87.54) --
	(148.19, 82.96) --
	(142.90, 82.96) --
	cycle;

\path[fill=fillColor] (120.10, 69.55) --
	(122.74, 64.97) --
	(117.45, 64.97) --
	cycle;

\path[fill=fillColor] ( 69.20, 49.68) --
	( 71.84, 45.10) --
	( 66.55, 45.10) --
	cycle;

\path[fill=fillColor] (196.45,109.57) --
	(199.09,105.00) --
	(193.81,105.00) --
	cycle;

\path[fill=fillColor] ( 94.65, 65.18) --
	( 97.29, 60.60) --
	( 92.00, 60.60) --
	cycle;
\end{scope}
\begin{scope}
\path[clip] (  0.00,  0.00) rectangle (209.58,115.63);
\definecolor{drawColor}{RGB}{0,0,0}

\path[draw=drawColor,line width= 0.6pt,line join=round] ( 36.11, 30.69) --
	( 36.11,110.13);

\path[draw=drawColor,line width= 0.6pt,line join=round] ( 37.53,107.67) --
	( 36.11,110.13) --
	( 34.69,107.67);
\end{scope}
\begin{scope}
\path[clip] (  0.00,  0.00) rectangle (209.58,115.63);
\definecolor{drawColor}{gray}{0.30}

\node[text=drawColor,anchor=base east,inner sep=0pt, outer sep=0pt, scale=  0.88] at ( 31.16, 48.67) {1};

\node[text=drawColor,anchor=base east,inner sep=0pt, outer sep=0pt, scale=  0.88] at ( 31.16, 79.05) {100};
\end{scope}
\begin{scope}
\path[clip] (  0.00,  0.00) rectangle (209.58,115.63);
\definecolor{drawColor}{gray}{0.20}

\path[draw=drawColor,line width= 0.6pt,line join=round] ( 33.36, 51.70) --
	( 36.11, 51.70);

\path[draw=drawColor,line width= 0.6pt,line join=round] ( 33.36, 82.08) --
	( 36.11, 82.08);
\end{scope}
\begin{scope}
\path[clip] (  0.00,  0.00) rectangle (209.58,115.63);
\definecolor{drawColor}{RGB}{0,0,0}

\path[draw=drawColor,line width= 0.6pt,line join=round] ( 36.11, 30.69) --
	(204.08, 30.69);

\path[draw=drawColor,line width= 0.6pt,line join=round] (201.62, 29.26) --
	(204.08, 30.69) --
	(201.62, 32.11);
\end{scope}
\begin{scope}
\path[clip] (  0.00,  0.00) rectangle (209.58,115.63);
\definecolor{drawColor}{gray}{0.20}

\path[draw=drawColor,line width= 0.6pt,line join=round] ( 43.75, 27.94) --
	( 43.75, 30.69);

\path[draw=drawColor,line width= 0.6pt,line join=round] ( 94.65, 27.94) --
	( 94.65, 30.69);

\path[draw=drawColor,line width= 0.6pt,line join=round] (145.55, 27.94) --
	(145.55, 30.69);

\path[draw=drawColor,line width= 0.6pt,line join=round] (196.45, 27.94) --
	(196.45, 30.69);
\end{scope}
\begin{scope}
\path[clip] (  0.00,  0.00) rectangle (209.58,115.63);
\definecolor{drawColor}{gray}{0.30}

\node[text=drawColor,anchor=base,inner sep=0pt, outer sep=0pt, scale=  0.88] at ( 43.75, 19.68) {4};

\node[text=drawColor,anchor=base,inner sep=0pt, outer sep=0pt, scale=  0.88] at ( 94.65, 19.68) {8};

\node[text=drawColor,anchor=base,inner sep=0pt, outer sep=0pt, scale=  0.88] at (145.55, 19.68) {12};

\node[text=drawColor,anchor=base,inner sep=0pt, outer sep=0pt, scale=  0.88] at (196.45, 19.68) {16};
\end{scope}
\begin{scope}
\path[clip] (  0.00,  0.00) rectangle (209.58,115.63);
\definecolor{drawColor}{RGB}{0,0,0}

\node[text=drawColor,anchor=base,inner sep=0pt, outer sep=0pt, scale=  1.10] at (120.10,  7.64) {$n$};
\end{scope}
\begin{scope}
\path[clip] (  0.00,  0.00) rectangle (209.58,115.63);
\definecolor{drawColor}{RGB}{0,0,0}

\node[text=drawColor,rotate= 90.00,anchor=base,inner sep=0pt, outer sep=0pt, scale=  1.10] at ( 13.08, 70.41) {time (ms)};
\end{scope}
\begin{scope}
\path[clip] (  0.00,  0.00) rectangle (209.58,115.63);

\path[] ( 34.47, 74.29) rectangle ( 94.86,114.20);
\end{scope}
\begin{scope}
\path[clip] (  0.00,  0.00) rectangle (209.58,115.63);
\definecolor{drawColor}{RGB}{247,192,26}

\path[draw=drawColor,line width= 0.6pt,line join=round] ( 41.42,101.47) -- ( 52.98,101.47);
\end{scope}
\begin{scope}
\path[clip] (  0.00,  0.00) rectangle (209.58,115.63);
\definecolor{fillColor}{RGB}{247,192,26}

\path[fill=fillColor] ( 47.20,101.47) circle (  1.96);
\end{scope}
\begin{scope}
\path[clip] (  0.00,  0.00) rectangle (209.58,115.63);
\definecolor{drawColor}{RGB}{37,122,164}

\path[draw=drawColor,line width= 0.6pt,dash pattern=on 2pt off 2pt ,line join=round] ( 41.42, 87.02) -- ( 52.98, 87.02);
\end{scope}
\begin{scope}
\path[clip] (  0.00,  0.00) rectangle (209.58,115.63);
\definecolor{fillColor}{RGB}{37,122,164}

\path[fill=fillColor] ( 47.20, 90.07) --
	( 49.84, 85.49) --
	( 44.56, 85.49) --
	cycle;
\end{scope}
\begin{scope}
\path[clip] (  0.00,  0.00) rectangle (209.58,115.63);
\definecolor{drawColor}{RGB}{0,0,0}

\node[text=drawColor,anchor=base west,inner sep=0pt, outer sep=0pt, scale=  0.80] at ( 59.92, 98.71) {DECOR};
\end{scope}
\begin{scope}
\path[clip] (  0.00,  0.00) rectangle (209.58,115.63);
\definecolor{drawColor}{RGB}{0,0,0}

\node[text=drawColor,anchor=base west,inner sep=0pt, outer sep=0pt, scale=  0.80] at ( 59.92, 84.26) {Naive};
\end{scope}
\end{tikzpicture}
	% Created by tikzDevice version 0.12.5 on 2024-05-08 13:26:50
% !TEX encoding = UTF-8 Unicode
\begin{tikzpicture}[x=1pt,y=1pt]
\definecolor{fillColor}{RGB}{255,255,255}
\path[use as bounding box,fill=fillColor,fill opacity=0.00] (0,0) rectangle (209.58,115.63);
\begin{scope}
\path[clip] (  0.00,  0.00) rectangle (209.58,115.63);
\definecolor{drawColor}{RGB}{255,255,255}
\definecolor{fillColor}{RGB}{255,255,255}

\path[draw=drawColor,line width= 0.6pt,line join=round,line cap=round,fill=fillColor] (  0.00,  0.00) rectangle (209.58,115.63);
\end{scope}
\begin{scope}
\path[clip] ( 36.11, 30.69) rectangle (204.08,110.13);
\definecolor{fillColor}{RGB}{255,255,255}

\path[fill=fillColor] ( 36.11, 30.69) rectangle (204.08,110.13);
\definecolor{drawColor}{RGB}{247,192,26}

\path[draw=drawColor,line width= 0.6pt,line join=round] ( 43.75, 35.63) --
	( 65.56, 45.60) --
	( 87.38, 56.06) --
	(109.19, 66.70) --
	(131.00, 76.58) --
	(152.82, 86.28) --
	(174.63, 96.61) --
	(196.45,106.52);
\definecolor{drawColor}{RGB}{37,122,164}

\path[draw=drawColor,line width= 0.6pt,dash pattern=on 2pt off 2pt ,line join=round] ( 43.75, 34.30) --
	( 65.56, 41.33) --
	( 87.38, 49.88) --
	(109.19, 59.44) --
	(131.00, 68.97) --
	(152.82, 78.31) --
	(174.63, 87.86) --
	(196.45, 97.27);
\definecolor{fillColor}{RGB}{247,192,26}

\path[fill=fillColor] (174.63, 96.61) circle (  1.96);

\path[fill=fillColor] ( 65.56, 45.60) circle (  1.96);

\path[fill=fillColor] (152.82, 86.28) circle (  1.96);

\path[fill=fillColor] (131.00, 76.58) circle (  1.96);

\path[fill=fillColor] ( 43.75, 35.63) circle (  1.96);

\path[fill=fillColor] ( 87.38, 56.06) circle (  1.96);

\path[fill=fillColor] (196.45,106.52) circle (  1.96);

\path[fill=fillColor] (109.19, 66.70) circle (  1.96);
\definecolor{fillColor}{RGB}{37,122,164}

\path[fill=fillColor] (174.63, 90.91) --
	(177.28, 86.33) --
	(171.99, 86.33) --
	cycle;

\path[fill=fillColor] ( 65.56, 44.38) --
	( 68.20, 39.80) --
	( 62.92, 39.80) --
	cycle;

\path[fill=fillColor] (152.82, 81.36) --
	(155.46, 76.78) --
	(150.18, 76.78) --
	cycle;

\path[fill=fillColor] (131.00, 72.03) --
	(133.65, 67.45) --
	(128.36, 67.45) --
	cycle;

\path[fill=fillColor] ( 43.75, 37.35) --
	( 46.39, 32.77) --
	( 41.10, 32.77) --
	cycle;

\path[fill=fillColor] ( 87.38, 52.93) --
	( 90.02, 48.35) --
	( 84.73, 48.35) --
	cycle;

\path[fill=fillColor] (196.45,100.32) --
	(199.09, 95.75) --
	(193.81, 95.75) --
	cycle;

\path[fill=fillColor] (109.19, 62.49) --
	(111.83, 57.91) --
	(106.55, 57.91) --
	cycle;
\end{scope}
\begin{scope}
\path[clip] (  0.00,  0.00) rectangle (209.58,115.63);
\definecolor{drawColor}{RGB}{0,0,0}

\path[draw=drawColor,line width= 0.6pt,line join=round] ( 36.11, 30.69) --
	( 36.11,110.13);

\path[draw=drawColor,line width= 0.6pt,line join=round] ( 37.53,107.67) --
	( 36.11,110.13) --
	( 34.69,107.67);
\end{scope}
\begin{scope}
\path[clip] (  0.00,  0.00) rectangle (209.58,115.63);
\definecolor{drawColor}{gray}{0.30}

\node[text=drawColor,anchor=base east,inner sep=0pt, outer sep=0pt, scale=  0.88] at ( 31.16, 58.89) {1};

\node[text=drawColor,anchor=base east,inner sep=0pt, outer sep=0pt, scale=  0.88] at ( 31.16, 87.93) {100};
\end{scope}
\begin{scope}
\path[clip] (  0.00,  0.00) rectangle (209.58,115.63);
\definecolor{drawColor}{gray}{0.20}

\path[draw=drawColor,line width= 0.6pt,line join=round] ( 33.36, 61.92) --
	( 36.11, 61.92);

\path[draw=drawColor,line width= 0.6pt,line join=round] ( 33.36, 90.96) --
	( 36.11, 90.96);
\end{scope}
\begin{scope}
\path[clip] (  0.00,  0.00) rectangle (209.58,115.63);
\definecolor{drawColor}{RGB}{0,0,0}

\path[draw=drawColor,line width= 0.6pt,line join=round] ( 36.11, 30.69) --
	(204.08, 30.69);

\path[draw=drawColor,line width= 0.6pt,line join=round] (201.62, 29.26) --
	(204.08, 30.69) --
	(201.62, 32.11);
\end{scope}
\begin{scope}
\path[clip] (  0.00,  0.00) rectangle (209.58,115.63);
\definecolor{drawColor}{gray}{0.20}

\path[draw=drawColor,line width= 0.6pt,line join=round] ( 65.56, 27.94) --
	( 65.56, 30.69);

\path[draw=drawColor,line width= 0.6pt,line join=round] (109.19, 27.94) --
	(109.19, 30.69);

\path[draw=drawColor,line width= 0.6pt,line join=round] (152.82, 27.94) --
	(152.82, 30.69);

\path[draw=drawColor,line width= 0.6pt,line join=round] (196.45, 27.94) --
	(196.45, 30.69);
\end{scope}
\begin{scope}
\path[clip] (  0.00,  0.00) rectangle (209.58,115.63);
\definecolor{drawColor}{gray}{0.30}

\node[text=drawColor,anchor=base,inner sep=0pt, outer sep=0pt, scale=  0.88] at ( 65.56, 19.68) {4};

\node[text=drawColor,anchor=base,inner sep=0pt, outer sep=0pt, scale=  0.88] at (109.19, 19.68) {8};

\node[text=drawColor,anchor=base,inner sep=0pt, outer sep=0pt, scale=  0.88] at (152.82, 19.68) {12};

\node[text=drawColor,anchor=base,inner sep=0pt, outer sep=0pt, scale=  0.88] at (196.45, 19.68) {16};
\end{scope}
\begin{scope}
\path[clip] (  0.00,  0.00) rectangle (209.58,115.63);
\definecolor{drawColor}{RGB}{0,0,0}

\node[text=drawColor,anchor=base,inner sep=0pt, outer sep=0pt, scale=  1.10] at (120.10,  7.64) {$n$};
\end{scope}
\begin{scope}
\path[clip] (  0.00,  0.00) rectangle (209.58,115.63);
\definecolor{drawColor}{RGB}{0,0,0}

\node[text=drawColor,rotate= 90.00,anchor=base,inner sep=0pt, outer sep=0pt, scale=  1.10] at ( 13.08, 70.41) {time (ms)};
\end{scope}
\begin{scope}
\path[clip] (  0.00,  0.00) rectangle (209.58,115.63);

\path[] ( 34.47, 74.29) rectangle ( 94.86,114.20);
\end{scope}
\begin{scope}
\path[clip] (  0.00,  0.00) rectangle (209.58,115.63);
\definecolor{drawColor}{RGB}{247,192,26}

\path[draw=drawColor,line width= 0.6pt,line join=round] ( 41.42,101.47) -- ( 52.98,101.47);
\end{scope}
\begin{scope}
\path[clip] (  0.00,  0.00) rectangle (209.58,115.63);
\definecolor{fillColor}{RGB}{247,192,26}

\path[fill=fillColor] ( 47.20,101.47) circle (  1.96);
\end{scope}
\begin{scope}
\path[clip] (  0.00,  0.00) rectangle (209.58,115.63);
\definecolor{drawColor}{RGB}{37,122,164}

\path[draw=drawColor,line width= 0.6pt,dash pattern=on 2pt off 2pt ,line join=round] ( 41.42, 87.02) -- ( 52.98, 87.02);
\end{scope}
\begin{scope}
\path[clip] (  0.00,  0.00) rectangle (209.58,115.63);
\definecolor{fillColor}{RGB}{37,122,164}

\path[fill=fillColor] ( 47.20, 90.07) --
	( 49.84, 85.49) --
	( 44.56, 85.49) --
	cycle;
\end{scope}
\begin{scope}
\path[clip] (  0.00,  0.00) rectangle (209.58,115.63);
\definecolor{drawColor}{RGB}{0,0,0}

\node[text=drawColor,anchor=base west,inner sep=0pt, outer sep=0pt, scale=  0.80] at ( 59.92, 98.71) {DECOR};
\end{scope}
\begin{scope}
\path[clip] (  0.00,  0.00) rectangle (209.58,115.63);
\definecolor{drawColor}{RGB}{0,0,0}

\node[text=drawColor,anchor=base west,inner sep=0pt, outer sep=0pt, scale=  0.80] at ( 59.92, 84.26) {Naive};
\end{scope}
\end{tikzpicture}
	\caption{Run times of \ac{decor} and the \enquote{naive} algorithm for factors with different numbers of arguments $n$, of which $k=n-1$ (left) and $k=n$ (right) arguments are commutative, respectively. Both plots use a logarithmic scale on the y-axis.}
	\label{fig:plot-k=nsub1-k=n}
\end{figure}

\Cref{fig:plot-k=log2n-k=ndiv2} presents the run times of the algorithms on factors with $k=\lfloor\log_2(n)\rfloor$ (left plot) and $k=\lfloor\frac{n}{2}\rfloor$ (right plot) commutative arguments on a logarithmic scale.
The general pattern of the run times is again similar to \cref{fig:plot-avg} and we therefore move on to take a look at \cref{fig:plot-k=nsub1-k=n}, which depicts the run times of the algorithms on factors having $k=n-1$ (left plot) and $k=n$ (right plot) commutative arguments, again on a logarithmic scale.

The left plot of \cref{fig:plot-k=nsub1-k=n} shows that the \enquote{naive} algorithm is now able to solve all instances within the specified timeout and its run times are similar to the run times of \ac{decor}.
As $k=n-1$, the \enquote{naive} algorithm is able to find a subset of commutative arguments in few iterations due to its approach of starting with the largest subsets first.
Consequently, the \enquote{naive} algorithm performs well on the right plot where $k=n$ as it considers exactly one subset and immediately finishes afterwards.
Therefore, \ac{decor} is even slightly slower than the \enquote{naive} algorithm in this particular extrem case---however, the difference is only marginal.
In conclusion, we find that the \enquote{naive} approach is generally not able to scale to numbers of arguments larger than $12$ (only in extrem cases) whereas \ac{decor} is able to handle large $n$ efficiently in every evaluated scenario.
\end{document}